\documentclass{article}

\usepackage[preprint]{neurips_2026} 

\usepackage[utf8]{inputenc} 
\usepackage[T1]{fontenc}    
\usepackage{hyperref}       
\usepackage{url}            
\usepackage{microtype}      
\usepackage{xcolor}         

\usepackage{amsmath, amssymb, enumitem}

\usepackage{amsthm} 
\newtheorem{theorem}{Theorem}
\newtheorem{lemma}[theorem]{Lemma}
\newtheorem{proposition}[theorem]{Proposition}
\newtheorem{corollary}[theorem]{Corollary}

\newtheorem{assumption}{Assumption}

\usepackage{bbm}

\usepackage{mathtools} 
\usepackage{float}
\usepackage{graphicx}
\usepackage{subcaption}
\usepackage{algorithm}
\usepackage{algpseudocode}

\DeclareMathOperator{\Var}{Var}
\DeclareMathOperator{\Cov}{Cov}
\DeclareMathOperator{\E}{\mathbb{E}}

\title{Sharper Regret Bounds for Time-Varying Gaussian Process Bandits with Constant Exploration}

\author{%
  Matthias Mandl \\
  Delft Institute of Applied Mathematics\\
  Delft University of Technology\\
  Mekelweg 4, 2628 CD, Delft \\
  \texttt{M.J.Mandl-2@tudelft.nl} \\
  \And
  Hanne Kekkonen \\
  Delft Institute of Applied Mathematics\\
  Delft University of Technology\\
  Mekelweg 4, 2628 CD, Delft \\
  \texttt{H.N.Kekkonen@tudelft.nl} \\
}

\begin{document}

\maketitle

\begin{abstract}
We study Bayesian optimization in a time-varying environment where the unknown reward function evolves according to a Gaussian process drift model. Existing GP-UCB analyses in this setting typically require the exploration parameter to grow with the horizon to maintain uniform confidence bounds. Using per-round local confidence events, we show that GP-UCB can instead be run with a constant exploration parameter and obtain an expected-regret bound whose coefficient depends on the drift rate. We also derive a sharper time-varying maximum-information-gain bound. For the squared exponential kernel, it yields $\tilde\gamma_T/T=\widetilde{\mathcal O}(\epsilon^{1/2})$ and expected average regret $\widetilde{\mathcal O}(\epsilon^{1/4})$ in the persistent-drift regime. The same constant-exploration analysis also yields realized-regret guarantees. Simulations support the predicted logarithmic dependence of the bound-suggested exploration parameter on $1/\epsilon$.
\end{abstract}

\section{Introduction}
\label{sec:intro}

Many sequential decision problems require optimizing an unknown objective through a limited number of costly evaluations. Examples include tuning the hyperparameters of machine learning models \citep{snoek2012practical}, adapting control policies in robotics \citep{calandra2016bayesian}, and selecting treatment strategies in clinical trials \citep{villar2015multi}. In these settings, each evaluation reveals only imperfect information about the objective, forcing a fundamental trade-off between exploration and exploitation.

Bayesian optimization (BO) provides a framework for sequentially optimizing an unknown objective by combining a probabilistic surrogate model with an acquisition rule. In the standard GP bandit formulation, the objective is a fixed function $f:\mathcal{S}\to\mathbb{R}$. At each round $t$, the learner selects $x_t\in\mathcal{S}$ and observes $y_t=f(x_t)+z_t$, where $z_t$ is observation noise. Gaussian processes (GPs) are commonly used as surrogate models because they provide posterior mean and uncertainty estimates in closed form.

When decisions affect performance throughout the learning process, performance is typically measured by cumulative regret, $R_T=\sum_{t=1}^T(\max_{x\in\mathcal{S}} f(x)-f(x_t))$. This criterion originates in the multi-armed bandit literature \citep{robbins1952some, lai1985asymptotically, auer2002finite}. Algorithms such as Gaussian Process Upper Confidence Bound (GP-UCB) \citep{srinivas2012information} and GP-Thompson sampling \citep{russo2014learning, chowdhury2017kernelized} extend bandit methods to continuous domains and achieve sublinear regret under suitable kernel regularity assumptions, with guarantees typically expressed through the maximum information gain \citep{srinivas2012information, vakili2021information, scarlett2017lower}.

However, many applications are inherently dynamic. System performance may drift due to environmental changes or evolving user behavior, necessitating a time-dependent reward model $f_t:\mathcal{S} \to \mathbb{R}$. When the objective changes over time, sublinear cumulative regret is generally unachievable. The relevant theoretical goal shifts to bounding the average regret, $R_T/T$, as a function of the environmental drift rate.

A principled probabilistic model for this setting was introduced by \citet{bogunovic2016time}. In their time-varying Gaussian process (TV-GP) model, the latent function evolves according to a Markov recursion driven by GP innovations, which induces a separable spatio-temporal kernel. Under this model, the learner updates a temporally discounted GP posterior and selects actions using a modified GP-UCB rule. Their analysis shows that near-optimal regret rates are achievable up to logarithmic factors. Subsequent work has studied related models of nonstationarity, including change-point structure \citep{brunzema2022event}, irregular evaluation times \citep{imamura2020time}, and bounded variation in RKHS norms \citep{deng2022weighted}. A separate frequentist line studies nonstationary kernelized bandits under sup-norm variation, including algorithms \citep{zhou2021no, hong2023optimization, iwazaki2025near} and lower bounds \citep{cai2025lower}. Other extensions include kernel-agnostic adaptation \citep{ziomek2024time} and stale-data removal in dynamic Bayesian optimization \citep{bardou2024too, bardou2025optimizing}.

A recurring theoretical and practical limitation in these analyses is the exploration schedule. Existing regret guarantees typically require the GP-UCB exploration parameter, $\beta_t$, to grow with time $t$. This leads to deteriorating performance on long time horizons and introduces schedule tuning parameters. In practice, such schedules are difficult to calibrate and induce unnecessary over-exploration.

This paper revisits the TV-GP setting of \citet{bogunovic2016time}, where the temporal evolution is governed by a drift parameter $\epsilon>0$, and shows that GP-UCB can be run with a constant exploration parameter while retaining rigorous performance guarantees. Our analysis uses local confidence events at each round rather than a single event that holds uniformly over the horizon. For any fixed drift level, Theorem~\ref{thrm:improved_regret} gives an expected regret bound with linear dependence on $T$ and a coefficient controlled by $\tilde\gamma_T/T$. In the persistent-drift regime, this yields a coefficient that decreases as the drift rate decreases. This differs from standard deterministic uniform-confidence GP-UCB analyses in static environments, where the confidence parameter generally depends on the horizon or time.

The analysis also gives a sharper dependence on the drift rate through the time-varying maximum information gain. If the static information gain satisfies $\gamma_N=\widetilde{\mathcal O}(N^q)$, then in the persistent-drift regime
\[
\frac{\tilde\gamma_T}{T}
=
\widetilde{\mathcal O}\!\left(\epsilon^{\frac{2(1-q)}{4-q}}\right),
\qquad
\frac{\mathbb E[R_T]}{T}
=
\widetilde{\mathcal O}\!\left(\epsilon^{\frac{1-q}{4-q}}\right).
\]
For the squared exponential kernel, the latter becomes $\widetilde{\mathcal O}(\epsilon^{1/4})$. Our analysis also isolates how the bound-suggested constant exploration parameter depends on the drift rate. In the regret bounds of \citet{bogunovic2016time}, the prescribed exploration schedule is independent of $\epsilon$. In contrast, balancing the upper-bound scaling suggests that the bound-suggested constant $\beta$ increases as $\epsilon$ decreases. This does not by itself imply uniformly more exploration, since the UCB rule depends on the product of $\sqrt{\beta}$ and the posterior standard deviation. Rather, when drift is slower, past observations remain informative for longer and the posterior variance is typically smaller. A larger $\beta$ compensates for this reduced variance by assigning more value to reducing the remaining uncertainty, which is also more consequential because information acquired now remains useful over a longer effective horizon.

The dependence of the bound-suggested exploration parameter on the temporal dynamics also gives a simple rule for changes in sampling frequency. If observations are collected every $\Delta$ time units, larger $\Delta$ corresponds to greater effective drift between successive evaluations. Temporal correlations then decay faster, past observations become obsolete more quickly, and the analysis predicts a smaller bound-suggested exploration parameter.

The remainder of the paper is organized as follows. Section~\ref{sec:problem} introduces the time-varying GP model, the domain regularity assumptions, and the temporally discounted GP-UCB policy. Section~\ref{sec:improved_regret} presents the expected-regret bound and the post-hoc and prescribed-confidence guarantees for realized regret. Section~\ref{sec:implications} derives the drift-dependent information-gain and regret rates and analyzes the bound-suggested exploration parameter. Section~\ref{sec:simulation_study} provides a supporting empirical simulation study. Section~\ref{sec:conclusion} provides the conclusion. Additional proofs and technical results are deferred to the appendices.

\section{Problem setting}
\label{sec:problem}

We start by introducing the dynamic regret setting. Let $\mathcal{S}\subseteq\mathbb{R}^d$ be compact. At each round $t=1,2,\ldots$, the algorithm selects a query point $x_t\in\mathcal{S}$ and observes
\begin{equation}
y_t = f_t(x_t) + z_t, 
\qquad 
z_t \sim \mathcal{N}(0,\sigma^2)\ \text{i.i.d.},
\label{eq:obs_model}
\end{equation}
where $f_t:\mathcal{S}\to\mathbb{R}$ is an unknown reward function that changes with time. Let
\[
x_t^\star \in \arg\max_{x\in\mathcal{S}} f_t(x)
\]
denote a maximizer at round $t$. The instantaneous regret is
\begin{equation}
r_t := f_t(x_t^\star)-f_t(x_t) \geq 0,
\label{eq:inst_regret}
\end{equation}
and the cumulative regret over $T$ rounds is
\begin{equation}
R_T := \sum_{t=1}^T r_t.
\label{eq:cum_regret}
\end{equation}
In the non-stationary setting, sublinear cumulative regret is not expected in general, so the main objective is to control the linear growth rate and its dependence on the temporal properties of $f_t$. We model the function sequence $f_t$ using the time-varying Gaussian process construction of \citet{bogunovic2016time}. Let $(g_t)_{t\ge 1}$ be i.i.d.\ draws from $\mathcal{GP}(0,k_S)$, where $k_S$ is a spatial kernel on $\mathcal{S}$. The latent process is defined by
\begin{equation}
f_1(x)=g_1(x),\qquad
f_{t+1}(x)=\sqrt{1-\epsilon}\,f_t(x)+\sqrt{\epsilon}\,g_{t+1}(x),
\quad t\ge 1,
\label{eq:tvgp}
\end{equation}
with drift parameter $\epsilon\in(0,1]$. For each fixed $t$, the marginal law remains $\mathcal{GP}(0,k_S)$, while the temporal dependence decays geometrically. Equivalently, $\{f_t(x)\}_{(x,t)}$ is a Gaussian process on $\mathcal{S}\times\mathbb{N}$ with separable kernel
\begin{equation}
k\big((x,t),(x',t')\big)
=
k_S(x,x')\,k_T(t,t'),
\qquad
k_T(t,t')
=
(\sqrt{1-\epsilon})^{|t-t'|}.
\label{eq:sep_kernel}
\end{equation}
We refer to \eqref{eq:sep_kernel} as the time-decayed kernel, since its temporal factor discounts correlations geometrically with temporal separation.
We assume throughout that the prior variance is uniformly bounded,
\[
k_S(x,x)\le 1
\qquad
\text{for all } x\in\mathcal{S}.
\]

Given past data
\[
\mathcal{H}_{t-1}=\{(x_i,y_i)\}_{i=1}^{t-1},
\qquad
y_{1:t-1}=(y_1,\ldots,y_{t-1})^\top,
\]
Gaussian process regression under the kernel \eqref{eq:sep_kernel} yields a Gaussian posterior for $f_t(x)$ with mean and variance
\begin{align}
\mu_{t-1}(x)
&=
\tilde k_{t-1}(x)^\top
\big(\tilde K_{t-1}+\sigma^2 I\big)^{-1}
y_{1:t-1},
\label{eq:post_mean}\\
\sigma_{t-1}^2(x)
&=
k_S(x,x)
-
\tilde k_{t-1}(x)^\top
\big(\tilde K_{t-1}+\sigma^2 I\big)^{-1}
\tilde k_{t-1}(x),
\label{eq:post_var}
\end{align}
where
\begin{equation}
\tilde k_{t-1}(x)
:=
\big[
k_S(x_i,x)\,(\sqrt{1-\epsilon})^{|t-i|}
\big]_{i=1}^{t-1},
\qquad
\tilde K_{t-1}
:=
\big[
k_S(x_i,x_j)\,(\sqrt{1-\epsilon})^{|i-j|}
\big]_{i,j=1}^{t-1}.
\label{eq:discounted_kernels}
\end{equation}
These expressions are obtained by evaluating the spatio-temporal kernel \eqref{eq:sep_kernel} at the observed pairs $(x_i,i)$ and the prediction point $(x,t)$. The factors $(\sqrt{1-\epsilon})^{|t-i|}$ account for the fact that older observations are less informative about the current function $f_t$.

As in standard GP bandit analysis, we measure cumulative uncertainty by the time-varying maximum information gain \citep{bogunovic2016time}, which controls sums of posterior variances in regret bounds \citep[Lemma~5.4]{srinivas2012information}. For a design set
\[
A=\{(x_1,1),\ldots,(x_T,T)\},
\]
let $f_A=(f_1(x_1),\ldots,f_T(x_T))^\top$ and $y_A=(y_1,\ldots,y_T)^\top$. Since $y_A=f_A+z_A$ with independent Gaussian noise,
\begin{equation}
I(y_A;f_A)
=
\frac{1}{2}\log\det\big(I+\sigma^{-2}\tilde K_A\big),
\label{eq:mutual_info}
\end{equation}
where $\tilde K_A$ is the covariance matrix of $f_A$ induced by \eqref{eq:sep_kernel}. We define the time-varying maximum information gain as
\begin{equation}
\tilde\gamma_T
:=
\max_{x_{1:T}\in\mathcal{S}^T} I(y_A;f_A),
\label{eq:max_info_gain}
\end{equation}

To control the discretization error incurred on a continuous domain, we impose the following standard GP-UCB regularity condition \citep{srinivas2012information,bogunovic2016time}.

\begin{assumption}[Derivative tail bound]
\label{ass:smoothness}
There exist constants $a\ge 1$ and $b>0$ such that, for each $t\ge 1$, each spatial dimension $j\in\{1,\ldots,d\}$, and all $L>0$,
\begin{equation}
\mathbb{P}\Big\{
\sup_{x\in\mathcal{S}}
\big|
\frac{\partial f_t(x)}{\partial x^{(j)}}
\big|
>L
\Big\}
\le
a\exp\Big(-\big(\frac{L}{b}\big)^2\Big).
\label{eq:smoothness}
\end{equation}
\end{assumption}

This condition holds for many commonly used kernels, including the squared exponential kernel and Mat\'ern kernels with smoothness parameter $\nu>2$, under appropriate differentiability conditions on $k_S$; see \citet[Theorem~5]{ghosal2006posterior}. The TV-GP-UCB selection rule is given in Algorithm~\ref{alg:TV}. At round $t$, the algorithm selects
\begin{equation}
x_t \in \arg\max_{x\in\mathcal{S}}
\Big(
\mu_{t-1}(x)+\sqrt{\beta_t}\,\sigma_{t-1}(x)
\Big).
\label{eq:gpucb}
\end{equation}
The main result of this paper shows that under the time-varying model \eqref{eq:tvgp}, one can take $\beta_t\equiv\beta$ constant and obtain a regret guarantee with improved long-horizon scaling. This contrasts with earlier analyses that rely on time-increasing exploration schedules.

\begin{algorithm}[ht]
\caption{Constant-$\beta$ Time-Varying GP-UCB (TV-GP-UCB)}
\label{alg:TV}
\begin{algorithmic}[1]
\Require Domain $\mathcal{S}$, GP prior ($0$, $k_S$, $\sigma^2$), drift parameter $\epsilon \in (0,1]$, constant exploration parameter $\beta > 0$
\For {$t = 1,2, \dots$}
\State Choose $x_t \in \arg\max_{x\in\mathcal{S}} \big( \mu_{t-1}(x) + \sqrt{\beta} \sigma_{t-1}(x) \big)$
\State Observe $y_t = f_t(x_t) + z_t$
\State Update posterior mean $\mu_t$ and variance $\sigma_t^2$ via \eqref{eq:post_mean} and \eqref{eq:post_var}
\EndFor
\end{algorithmic}
\end{algorithm}
\section{Improved Regret Bound}
\label{sec:improved_regret}

Static GP-UCB analyses usually prove a single confidence event that holds simultaneously over all rounds and all points in a discretization. To keep this event at fixed probability over a horizon $T$, the per-round failure probabilities must be summable, which forces the exploration parameter $\beta_t$ to grow with $t$. Existing analyses of the time-varying GP model \eqref{eq:tvgp}, in particular \citet{bogunovic2016time}, use the same time-increasing exploration schedule. This introduces horizon-dependent tuning, leads to average-regret bounds that grow with $T$, and obscures the dependence of exploration on the drift rate $\epsilon$.

We avoid this global-in-time confidence event. When $\epsilon>0$, older observations are progressively discounted, so posterior uncertainty need not decrease monotonically. This built-in forgetting prevents the algorithm from becoming permanently overconfident at a fixed point and removes the need for a time-increasing exploration schedule in the analysis. We prove that Algorithm~\ref{alg:TV} with a constant exploration parameter attains an $\mathcal{O}(T)$ expected cumulative regret bound for every fixed $\epsilon>0$. The proof combines a per-round UCB regret bound on a local confidence event with an algorithm-agnostic bound on the rare complement event.

\subsection{Expected regret with a constant exploration parameter}

\begin{theorem}[Expected regret for constant-$\beta$ TV-GP-UCB]
\label{thrm:improved_regret}
Let $\mathcal{S} \subseteq [0,r]^d$ be compact and convex. Assume the time-varying GP model \eqref{eq:tvgp} with drift $\epsilon\in(0,1]$, spatial kernel $k_S$ satisfying $k_S(x,x)\le 1$ for all $x\in\mathcal{S}$, and the tail condition \eqref{eq:smoothness}. Assume observations are generated by
\[
y_t=f_t(x_t)+z_t,
\qquad
z_t\sim\mathcal{N}(0,\sigma^2)
\]
with independent noise. For $t\ge 1$, define the spatial span
\[
W_t := \sup_{x\in\mathcal{S}} f_t(x)-\inf_{x\in\mathcal{S}} f_t(x),
\]
let $\mu_w=\mathbb{E}[W_t]$, and let $\sigma_w^2$ be any finite sub-Gaussian variance proxy for $W_t$, so that
\[
\mathbb{P}\{W_t-\mu_w>u\}\le \exp\!\left(-\frac{u^2}{2\sigma_w^2}\right)
\qquad \text{for all } u>0.
\]
Fix $\delta\in(0,1)$ and $\tau\in\mathbb{N}^+$. If the evaluation points $\{x_t\}_{t=1}^T$ are selected according to Algorithm~\ref{alg:TV} with any constant exploration parameter satisfying
\begin{equation}
\beta \ge 2\Big(\log\!\big(\tfrac{6}{\delta}\big)+ d\log(\tau)\Big).
\label{eq:beta_threshold}
\end{equation}
then the expected cumulative regret satisfies
\begin{equation}
\mathbb{E}[R_T] \le \sqrt{C_1 T \beta\,\tilde{\gamma}_T}+ T C_2 + T\delta C_3,
\label{eq:R_T_expect_thrm_improved}
\end{equation}
where $\tilde{\gamma}_T=\tilde{\gamma}_T(\mathcal{S},k_S,\epsilon,\sigma)$ is the size-$T$ maximum information gain \eqref{eq:max_info_gain} under the time-decayed kernel, and
\[
C_1=\frac{8}{\log(1+\sigma^{-2})},\qquad
C_2=\frac{rdb}{\tau} \sqrt{\log\Big(\frac{3ad}{\delta}\Big)},\qquad
C_3=\mu_w+2\sigma_w\sqrt{2\log(e/\delta)}.
\]
\end{theorem}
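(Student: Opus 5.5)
The plan is to work round by round with a local good event $E_t$, and then sum expectations as $\mathbb{E}[r_t]=\mathbb{E}[r_t\mathbbm{1}_{E_t}]+\mathbb{E}[r_t\mathbbm{1}_{E_t^c}]$. At each round $t$, let $D_t\subset\mathcal{S}$ be a grid of $\tau^d$ points with spacing $r/\tau$ per coordinate, and let $[x]_t$ denote the closest grid point to $x$. Since $\mathcal{S}\subseteq[0,r]^d$, we have $\|x-[x]_t\|_1\le rd/\tau$. Define $E_t$ as the intersection of three events, each of probability at least $1-\delta/3$.

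\begin{enumerate}[label=(\roman*)]
\item The confidence bound $|f_t(x)-\mu_{t-1}(x)|\le\sqrt{\beta}\,\sigma_{t-1}(x)$ holds for all $x\in D_t$. Conditional on $\mathcal{H}_{t-1}$, $f_t(x)\sim\mathcal{N}(\mu_{t-1}(x),\sigma_{t-1}^2(x))$ exactly, because the model is jointly Gaussian with the kernel \eqref{eq:sep_kernel}. The Gaussian tail bound $2e^{-\beta/2}$ combined with a union bound over the $\tau^d$ grid points gives failure probability at most $\delta/3$ as soon as $\beta\ge 2\log(6\tau^d/\delta)$. This is exactly \eqref{eq:beta_threshold}.
\item The same confidence bound holds at the selected point $x_t$. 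This point is $\mathcal{H}_{t-1}$-measurable, so the conditional Gaussian argument applies directly with failure probability $2e^{-\beta/2}\le\delta/3$.
\item The Lipschitz bound $|f_t(x)-f_t(x')|\le L\|x-x'\|_1$ holds with $L=b\sqrt{\log(3ad/\delta)}$. This follows from Assumption~\ref{ass:smoothness} with a union bound over the $d$ coordinates; convexity of $\mathcal{S}$ justifies the mean value argument.
\end{enumerate}

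All three events are per-round, so no union over $t$ is needed. That is the whole point: $\beta$ does not grow with $T$.

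On $E_t$ the standard argument applies. We have $f_t(x_t^\star)\le f_t([x_t^\star]_t)+L\,rd/\tau\le\mu_{t-1}([x_t^\star]_t)+\sqrt\beta\sigma_{t-1}([x_t^\star]_t)+C_2$, which the UCB rule bounds by $\mu_{t-1}(x_t)+\sqrt\beta\sigma_{t-1}(x_t)+C_2$. Event (ii) then gives $r_t\le 2\sqrt\beta\,\sigma_{t-1}(x_t)+C_2$, where $C_2=Lrd/\tau$ matches the statement. Summing and applying Cauchy--Schwarz gives $\sum_t 2\sqrt\beta\sigma_{t-1}(x_t)\le\sqrt{4T\beta\sum_t\sigma_{t-1}^2(x_t)}$. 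Next use $\sigma^2_{t-1}\le 1$ together with $s\le \log(1+\sigma^{-2}s)/\log(1+\sigma^{-2})$ for $s\in[0,1]$, and the chain-rule identity for the mutual information \eqref{eq:mutual_info} under the time-varying kernel, which gives $\tfrac12\sum_t\log(1+\sigma^{-2}\sigma^2_{t-1}(x_t))\le\tilde\gamma_T$ (as in \citet{bogunovic2016time}, Lemma~5.3-type). Together these yield $\sqrt{C_1T\beta\tilde\gamma_T}$ with $C_1=8/\log(1+\sigma^{-2})$. This bound is pathwise, so it survives taking expectations, since $r_t\mathbbm 1_{E_t}\le(2\sqrt\beta\sigma_{t-1}(x_t)+C_2)$.

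The complement term is the main obstacle, because $r_t$ is correlated with $E_t^c$. Here I would use the algorithm-agnostic bound $r_t\le W_t$ and show $\mathbb{E}[W_t\mathbbm 1_{A}]\le\delta C_3$ for any event $A$ with $\mathbb{P}(A)\le\delta$. Write $\mathbb{E}[W_t\mathbbm 1_A]\le \int_0^\infty\min(\delta,\mathbb{P}(W_t>s))\,ds$. Split at $s_0=\mu_w+\sigma_w\sqrt{2\log(1/\delta)}$: the part below $s_0$ contributes at most $\delta s_0$, while the sub-Gaussian tail beyond $s_0$ contributes at most $\int_{s_0}^\infty e^{-(s-\mu_w)^2/2\sigma_w^2}ds\le \delta\sigma_w/\sqrt{2\log(1/\delta)}$ via the Mills ratio. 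Some care is needed with small $\log(1/\delta)$. To get the stated form, I would instead split at $\mu_w+\sigma_w\sqrt{2\log(e/\delta)}$ and bound the remaining tail crudely, so that the total fits into $\delta(\mu_w+2\sigma_w\sqrt{2\log(e/\delta)})$. Here $W_t\ge0$, and $\mathbb{P}(E_t^c)\le\delta$. Summing over $t$ gives $T\delta C_3$. Combining the two parts, and using Jensen/Cauchy--Schwarz (or the pathwise bound directly), yields \eqref{eq:R_T_expect_thrm_improved}.
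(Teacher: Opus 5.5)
Your proposal is correct and follows essentially the same route as the paper. It uses the same three per-round events (grid confidence via the $\tau^d$ union bound, confidence at the $\mathcal{H}_{t-1}$-measurable point $x_t$, and the derivative-tail Lipschitz event), the same pathwise bound $r_t\le 2\sqrt{\beta}\,\sigma_{t-1}(x_t)+C_2+W_t\mathbbm{1}_{E_t^c}$, Cauchy--Schwarz with the information-gain bound, and a tail-expectation bound for $W_t$ on an event of probability at most $\delta$. Your layer-cake split at $\mu_w+q$ with $q=\sigma_w\sqrt{2\log(e/\delta)}$ differs from the paper's decomposition through $(W_t-\mu_w)_+$ only cosmetically: since $\sigma_w^2/q\le q$, it gives $\delta(\mu_w+q+q/e)\le\delta C_3$.
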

For comparison, since $r_t\le W_t$ for every policy, the policy-independent span bound is
\[
R_T\le\sum_{t=1}^T W_t,
\qquad
\frac{\mathbb E[R_T]}{T}\le\mu_w.
\]
Because each $f_t$ has marginal law $\mathcal{GP}(0,k_S)$, the quantity $\mu_w$ is independent of $\epsilon$. Thus Theorem~\ref{thrm:improved_regret} improves this policy-independent baseline whenever
\[
\sqrt{C_1\beta\,\frac{\tilde\gamma_T}{T}}+C_2+\delta C_3<\mu_w.
\]
Under the drift-dependent optimization in Section~\ref{subsec:info_gain_rates}, the algorithm-specific coefficient decreases as $\epsilon$ decreases in the persistent-drift regime, whereas $\mu_w$ does not.

The span constants $\mu_w$ and $\sigma_w$ are finite by Borell--TIS, since $W_t$ is the supremum of the centered Gaussian difference process $f_t(x)-f_t(y)$ over $\mathcal{S}\times\mathcal{S}$ and the sample paths are almost surely bounded on the compact domain.

The proof, given in Appendix~\ref{app:Improved_regret}, constructs a local confidence event $\mathcal{E}_t$ at each round rather than a single event over the full horizon. The event combines posterior concentration at the selected point, posterior concentration over a spatial grid of size at most $\tau^d$, and the derivative tail bound \eqref{eq:smoothness}. On $\mathcal{E}_t$, the instantaneous regret satisfies
\[
r_t \le 2\sqrt{\beta}\,\sigma_{t-1}(x_t)+C_2,
\]
where $C_2$ is the discretization error. On the complement $\mathcal{E}_t^c$, we use only the deterministic inequality $r_t\le W_t$. The sub-Gaussian tail of $W_t$ gives
\[
\mathbb{E}\big[W_t\mathbbm{1}_{\mathcal{E}_t^c}\big]\le \delta C_3.
\]
Therefore,
\[
\mathbb{E}[r_t]
\le
2\sqrt{\beta}\,\mathbb{E}[\sigma_{t-1}(x_t)]
+
C_2
+
\delta C_3.
\]
Summing over $t$ and applying the standard information-gain bound on posterior variances yields \eqref{eq:R_T_expect_thrm_improved}.

We use the notation $\delta$ and $\tau$ from \citet{bogunovic2016time}: $\delta$ controls the confidence level and $\tau^d$ is the spatial grid size. The distinction is that in their analysis $\delta$ is tied to a single confidence event over the full horizon, and therefore determines the time-varying exploration schedule $\beta_t$. Here $\delta$ only controls the local event $\mathcal{E}_t$ used in the expected-regret decomposition. Thus $\delta$ and $\tau$ are analysis parameters: each choice gives a corresponding constant $\beta$ and expected-regret bound.

Since $\tilde{\gamma}_T$ depends on the drift rate $\epsilon$, the bound-balancing choice of $(\delta,\tau)$ generally depends on $\epsilon$ as well. This is how the bound-suggested constant exploration parameter inherits a dependence on the temporal dynamics. Section~\ref{subsec:Opt_beta_eps} makes this explicit: under the scaling $\tilde{\gamma}_T=\widetilde{\mathcal{O}}(T\epsilon^\alpha)$, the resulting bound-suggested choice satisfies $\beta_{\mathrm{opt}}=\mathcal{O}(\log(1/\epsilon))$.

\subsection{Realized cumulative regret and confidence guarantees}
\label{subsec:prob_regret_bound}

Theorem~\ref{thrm:improved_regret} bounds the expected cumulative regret. For realized regret, we distinguish two uses of a constant exploration parameter. In the post-hoc setting, $\beta$ is fixed without reference to the reporting confidence level $\omega$, whereas in the prescribed-confidence setting, $\omega$ is specified before the run and is used to select a constant $\beta$. In both cases, $\beta$ is independent of $t$ and $T$.

Standard analyses of time-varying GP bandits construct probabilistic guarantees using a time-increasing exploration schedule, typically $\beta_t=\mathcal{O}(\log(t/\omega))$. This enforces a global union bound over the horizon but introduces an $\mathcal{O}(\sqrt{\log T})$ factor in the leading information-gain term. We instead retain a constant exploration parameter and control the accumulated contribution of the local confidence failures.

\paragraph{Post-hoc confidence.}
Let
\[
U_T := \sqrt{C_1 T \beta \tilde{\gamma}_T} + T C_2
\]
denote the deterministic contribution from the UCB confidence widths and the discretization error. The proof bounds the realized pathwise regret by combining this term with the accumulated span over failed confidence events and applying Markov's inequality to the latter.

\begin{proposition}[Realized cumulative regret]
\label{prop:realized_regret}
Assume the setting and parameter choices of Theorem~\ref{thrm:improved_regret}. For any target failure probability $\omega \in (0,1)$, the realized cumulative regret $R_T$ satisfies the following bound with probability at least $1-\omega$:
\begin{equation}
R_T \;\le\; U_T \;+\; \frac{T \delta C_3}{\omega}.
\label{eq:bound_markov}
\end{equation}
\end{proposition}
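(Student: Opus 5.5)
The plan is to reuse the per-round local confidence events $\mathcal{E}_t$ from the proof of Theorem~\ref{thrm:improved_regret} and split the realized regret pathwise:
\[
R_T=\sum_{t=1}^T r_t\mathbbm{1}_{\mathcal{E}_t}+\sum_{t=1}^T r_t\mathbbm{1}_{\mathcal{E}_t^c}.
\]
On $\mathcal{E}_t$ we have $r_t\le 2\sqrt{\beta}\,\sigma_{t-1}(x_t)+C_2$ deterministically. On $\mathcal{E}_t^c$ we use only $r_t\le W_t$. This yields the pathwise inequality
\[
R_T\le 2\sqrt{\beta}\sum_{t=1}^T\sigma_{t-1}(x_t)+TC_2+\sum_{t=1}^T W_t\mathbbm{1}_{\mathcal{E}_t^c}.
\]
Dropping the indicator on the first sum is harmless, since every term in it is nonnegative.

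The first term is bounded deterministically, for every sample path. By Cauchy--Schwarz, $\sum_t\sigma_{t-1}(x_t)\le\sqrt{T\sum_t\sigma_{t-1}^2(x_t)}$. The standard information-gain argument, \citet[Lemma~5.4]{srinivas2012information} adapted to the time-decayed kernel \eqref{eq:sep_kernel} exactly as in the proof of Theorem~\ref{thrm:improved_regret}, gives $4\beta\sum_t\sigma_{t-1}^2(x_t)\le C_1\beta\tilde\gamma_T$. This step uses $k_S(x,x)\le 1$ and the chain rule of mutual information for the realized design $\{(x_t,t)\}$, and the result is bounded by the maximum \eqref{eq:max_info_gain}. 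Hence $2\sqrt\beta\sum_t\sigma_{t-1}(x_t)+TC_2\le U_T$ almost surely.

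The remaining random term is $Z_T:=\sum_{t=1}^T W_t\mathbbm{1}_{\mathcal{E}_t^c}$, which is nonnegative because $W_t\ge 0$. The proof of Theorem~\ref{thrm:improved_regret} already gives $\mathbb{E}[W_t\mathbbm{1}_{\mathcal{E}_t^c}]\le\delta C_3$ for each $t$, so linearity yields $\mathbb{E}[Z_T]\le T\delta C_3$. Markov's inequality then gives $\mathbb{P}\{Z_T>T\delta C_3/\omega\}\le\omega$. On the complementary event, which has probability at least $1-\omega$, we obtain $R_T\le U_T+T\delta C_3/\omega$, which is \eqref{eq:bound_markov}.

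I expect the main obstacle to be making sure the per-round facts from the theorem's proof hold in the form needed here. The on-event regret bound must be a pointwise statement, not merely one in expectation, and the tail bound $\mathbb{E}[W_t\mathbbm{1}_{\mathcal{E}_t^c}]\le\delta C_3$ must hold for every $t$ with $\mathbb{P}(\mathcal{E}_t^c)\le\delta$. Once these per-round ingredients are stated explicitly, as they are in the sketch following Theorem~\ref{thrm:improved_regret}, the argument needs no union bound over $t$. Avoiding that union bound is exactly why $\beta$ can remain constant, at the cost of the $1/\omega$ factor from Markov's inequality.
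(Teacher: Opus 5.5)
Your proposal is correct and follows the paper's proof in Appendix~\ref{app:proof_realized_regret} essentially step for step. It uses the same pathwise bound $r_t\le 2\sqrt{\beta}\,\sigma_{t-1}(x_t)+C_2+W_t\mathbbm{1}_{\mathcal{E}_t^c}$, the same almost-sure control of the first two summed terms by $U_T$ via Cauchy--Schwarz and $\tilde\gamma_T$, and Markov's inequality on the nonnegative failure sum using $\mathbb{E}[W_t\mathbbm{1}_{\mathcal{E}_t^c}]\le\delta C_3$ from Lemma~\ref{lem:tail_expectation_span}.
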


The proof is deferred to Appendix~\ref{app:proof_realized_regret}. Here $\delta$ is the per-round analysis parameter that determines the constant $\beta$, whereas $\omega$ enters only in the reported bound. Thus the algorithm can be run with $\beta$ fixed independently of $\omega$, and the bound can subsequently be evaluated at any fixed reporting level $\omega$ chosen independently of the realized data. The cost of this post-hoc guarantee is the polynomial $1/\omega$ dependence.

\paragraph{Prescribed confidence.}
If the target failure probability is specified before the run, the same result gives logarithmic dependence on $1/\omega$ while retaining a constant exploration parameter.

\begin{corollary}[Prescribed-confidence realized regret]
\label{cor:prescribed_confidence}
Assume the setting of Theorem~\ref{thrm:improved_regret} and suppose that, in the regime under consideration,
\[
\frac{\tilde\gamma_T}{T}
=
\widetilde{\mathcal O}(\epsilon^\alpha),
\qquad \alpha>0.
\]
Fix $\omega\in(0,1)$ and choose
\[
\delta=\omega\epsilon^{\alpha/2},
\qquad
\tau=\left\lceil\epsilon^{-\alpha/2}\right\rceil,
\]
together with the constant exploration parameter
\[
\beta
=
2\left(
\log\!\frac{6}{\omega\epsilon^{\alpha/2}}
+
d\log\left\lceil\epsilon^{-\alpha/2}\right\rceil
\right)
=
\mathcal O\!\left(
\log(1/\omega)+\log(1/\epsilon)
\right).
\]
Then, with probability at least $1-\omega$,
\begin{equation}
\frac{R_T}{T}
=
\widetilde{\mathcal O}\!\left(
\epsilon^{\alpha/2}
\sqrt{
1+\log(1/\omega)+\log(1/\epsilon)
}
\right).
\label{eq:prescribed_confidence_rate}
\end{equation}
In particular, $\beta$ remains independent of $t$ and $T$.
\end{corollary}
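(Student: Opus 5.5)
We plug the prescribed parameters into Proposition~\ref{prop:realized_regret} and bound each of the three terms of \eqref{eq:bound_markov} divided by $T$. First, we check that the stated $\beta$ is admissible. With $\delta=\omega\epsilon^{\alpha/2}\in(0,1)$ and $\tau=\lceil\epsilon^{-\alpha/2}\rceil\in\mathbb{N}^+$, the choice of $\beta$ meets \eqref{eq:beta_threshold} with equality. Hence the setting of Theorem~\ref{thrm:improved_regret}, and therefore Proposition~\ref{prop:realized_regret}, applies. Since $\lceil\epsilon^{-\alpha/2}\rceil\le 2\epsilon^{-\alpha/2}$, we get $\beta=\mathcal O(\log(1/\omega)+\log(1/\epsilon))$, with $d$ and $\alpha$ treated as constants. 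This $\beta$ is clearly free of $t$ and $T$. Proposition~\ref{prop:realized_regret} then gives, with probability at least $1-\omega$,
\[
\frac{R_T}{T}\le \sqrt{C_1\beta\,\frac{\tilde\gamma_T}{T}}+C_2+\frac{\delta C_3}{\omega}.
\]

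We bound the three terms in turn.

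\textbf{First term.} The hypothesis $\tilde\gamma_T/T=\widetilde{\mathcal O}(\epsilon^\alpha)$ gives
\[
\sqrt{C_1\beta\,\tilde\gamma_T/T}=\widetilde{\mathcal O}\bigl(\epsilon^{\alpha/2}\sqrt{\log(1/\omega)+\log(1/\epsilon)}\bigr).
\]

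\textbf{Second term.} Since $\tau\ge\epsilon^{-\alpha/2}$,
\[
C_2\le rdb\,\epsilon^{\alpha/2}\sqrt{\log(3ad/\delta)}.
\]
Moreover, $\log(3ad/\delta)=\log(3ad)+\log(1/\omega)+\tfrac{\alpha}{2}\log(1/\epsilon)$, so this term is $\mathcal O\bigl(\epsilon^{\alpha/2}\sqrt{1+\log(1/\omega)+\log(1/\epsilon)}\bigr)$.

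\textbf{Third term.} The factor $\omega$ in $\delta$ cancels the Markov factor $1/\omega$, leaving $\epsilon^{\alpha/2}C_3$. Here $C_3=\mu_w+2\sigma_w\sqrt{2\log(e/\delta)}$, where $\mu_w,\sigma_w$ are independent of $\epsilon$ because the marginal law of $f_t$ is always $\mathcal{GP}(0,k_S)$. Also $\log(e/\delta)=1+\log(1/\omega)+\tfrac{\alpha}{2}\log(1/\epsilon)$. So the third term is also $\mathcal O\bigl(\epsilon^{\alpha/2}\sqrt{1+\log(1/\omega)+\log(1/\epsilon)}\bigr)$.

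Summing the three bounds yields \eqref{eq:prescribed_confidence_rate}.

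The main point needing care is the third term. Choosing $\delta$ proportional to $\omega$ is exactly what converts the polynomial $1/\omega$ loss of Proposition~\ref{prop:realized_regret} into a logarithmic one: it appears only through $\log(1/\delta)$ inside $\beta$, $C_2$ and $C_3$. The remaining work is routine bookkeeping. We must confirm that all hidden constants ($r,d,a,b,\sigma,\mu_w,\sigma_w,\alpha$) are independent of $\epsilon$, $\omega$ and $T$. We must also check that the polylogarithmic factors hidden in the $\widetilde{\mathcal O}$ of the information-gain hypothesis are simply carried through.
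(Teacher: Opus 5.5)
Your proposal is correct and takes the same route as the paper: apply Proposition~\ref{prop:realized_regret} with the prescribed $(\delta,\tau,\beta)$ and bound the three per-round terms, with $\delta\propto\omega$ cancelling the Markov factor $1/\omega$. Your extra bookkeeping only makes explicit what the paper leaves implicit: the admissibility of $\beta$, the bounds $\tau\ge\epsilon^{-\alpha/2}$ and $\lceil x\rceil\le 2x$, and the $\epsilon$-independence of $\mu_w,\sigma_w$; strictly, $\beta$ carries the additive constant $2\log 6$, so the uniform statement is $\beta=\mathcal O(1+\log(1/\omega)+\log(1/\epsilon))$, which is harmless because the final rate already includes the $1+$.
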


Proposition~\ref{prop:realized_regret} gives
\[
\frac{R_T}{T}
\le
\sqrt{C_1\beta\,\frac{\tilde\gamma_T}{T}}
+
C_2
+
\frac{\delta C_3}{\omega}.
\]
Under the choices above, the three terms are respectively
\[
\widetilde{\mathcal O}\!\left(
\epsilon^{\alpha/2}
\sqrt{\log(1/\omega)+\log(1/\epsilon)}
\right),
\qquad
\mathcal O\!\left(
\epsilon^{\alpha/2}
\sqrt{1+\log(1/\omega)+\log(1/\epsilon)}
\right),
\]
and
\[
\mathcal O\!\left(
\epsilon^{\alpha/2}
\sqrt{1+\log(1/\omega)+\log(1/\epsilon)}
\right),
\]
which yields \eqref{eq:prescribed_confidence_rate}.

\paragraph{Policy-independent fallback.}
Independently of the sampling policy,
\[
R_T\le\sum_{t=1}^T W_t.
\]
As established in Appendix~\ref{app:span_concentration}, with $\rho=\sqrt{1-\epsilon}$, this gives with probability at least $1-\omega$,
\begin{equation}
R_T
\le
\sum_{t=1}^T W_t
\le
T\mu_w
+
\sqrt{
8T\frac{1+\rho}{1-\rho}\log(1/\omega)
}.
\label{eq:span_fallback_main}
\end{equation}
This bound does not use the sampling decisions and is therefore not a replacement for the algorithm-specific guarantees above.

\section{Drift-dependent regret rates and exploration}
\label{sec:implications}

Theorem~\ref{thrm:improved_regret} gives a constant-exploration regret bound whose leading term is controlled by $\tilde\gamma_T/T$. We first combine it with the time-varying information-gain bound in Appendix~\ref{app:max_info_gain} to derive explicit drift-dependent rates, and then examine the corresponding bound-suggested choice of $\beta$.

\subsection{Information-gain and regret rates}
\label{subsec:info_gain_rates}

The time-varying information-gain bound in Appendix~\ref{app:max_info_gain} gives an explicit dependence on the drift rate. Suppose the static maximum information gain satisfies
\[
\gamma_N=\widetilde{\mathcal O}(N^q),
\qquad q\in[0,1).
\]
At the polynomial scale, the corresponding block length is
\[
N_\epsilon=\epsilon^{-\frac{2}{4-q}}.
\]
In the persistent-drift regime where $T$ exceeds this effective block length, Corollary~\ref{cor:info_gain_scaling} gives
\begin{equation}
\frac{\tilde\gamma_T}{T}
=
\widetilde{\mathcal O}\!\left(
\epsilon^{\frac{2(1-q)}{4-q}}
\right).
\label{eq:info_gain_main_rate}
\end{equation}
Balancing the remaining terms in Theorem~\ref{thrm:improved_regret} as in Section~\ref{subsec:Opt_beta_eps} then yields
\begin{equation}
\frac{\mathbb E[R_T]}{T}
=
\widetilde{\mathcal O}\!\left(
\epsilon^{\frac{1-q}{4-q}}
\right).
\label{eq:regret_main_rate}
\end{equation}

For the squared exponential kernel, $\gamma_N$ is polylogarithmic in $N$ \citep{srinivas2012information,vakili2021information,iwazaki2025improved}, so $q=0$ up to logarithmic factors and
\[
\frac{\tilde\gamma_T}{T}
=
\widetilde{\mathcal O}(\epsilon^{1/2}),
\qquad
\frac{\mathbb E[R_T]}{T}
=
\widetilde{\mathcal O}(\epsilon^{1/4}).
\]
For a Mat\'ern kernel with smoothness $\nu>2$, the static rate
$q=d/(2\nu+d)$ \citep{vakili2021information,iwazaki2025improved} gives
\[
\frac{\tilde\gamma_T}{T}
=
\widetilde{\mathcal O}\!\left(
\epsilon^{\frac{4\nu}{8\nu+3d}}
\right),
\qquad
\frac{\mathbb E[R_T]}{T}
=
\widetilde{\mathcal O}\!\left(
\epsilon^{\frac{2\nu}{8\nu+3d}}
\right).
\]

These rates are not fixed-$T$, $\epsilon\to0$ statements. If $N_\epsilon>T$, taking $\tilde N=T$ in Theorem~\ref{thm:TV_inf_gain_fixed} gives
\[
\tilde\gamma_T
\le
\gamma_T
+
\frac{\sigma^{-2}\epsilon\,T\sqrt{T^2-1}}{2\sqrt{3}}
\sqrt{\gamma_T},
\]
and hence $\tilde\gamma_T\to\gamma_T$ for fixed $T$ as $\epsilon\to0$. The constant-exploration result is therefore a positive-drift result and is not claimed to recover the usual static sublinear-regret guarantee at $\epsilon=0$; the static case requires a separate analysis \citep{takeno2025regret}.

\paragraph{Comparison with previous rates.}
The information-gain improvement is separate from the confidence-schedule improvement. If $\gamma_N=\widetilde{\mathcal O}(N^q)$, the original block perturbation in \citet{bogunovic2016time} gives the first rate below. The $\mathcal O(\epsilon N^{5/2})$ block order from the unit-time specialization of \citet{imamura2020time} gives the second, and Theorem~\ref{thm:TV_inf_gain_fixed} gives the third:
\[
\frac{\tilde\gamma_T}{T}
=
\widetilde{\mathcal O}\!\left(
\epsilon^{\frac{1-q}{3-q}}
\right),
\qquad
\widetilde{\mathcal O}\!\left(
\epsilon^{\frac{2(1-q)}{5-2q}}
\right),
\qquad
\widetilde{\mathcal O}\!\left(
\epsilon^{\frac{2(1-q)}{4-q}}
\right).
\]
The time-growing confidence schedule contributes a factor of order
$\sqrt{\log(T/\omega)}$ to the leading regret term, whereas the prescribed-confidence constant-$\beta$ choice uses
\[
\beta
=
\mathcal O\!\left(
\log(1/\omega)+\log(1/\epsilon)
\right),
\]
so the corresponding confidence factor is independent of $T$.

\subsection{Bound-suggested $\beta$ dependence on $\epsilon$}
\label{subsec:Opt_beta_eps}

Here $\beta$ denotes any constant exploration parameter satisfying \eqref{eq:beta_threshold}, while $\beta_{\mathrm{opt}}$ denotes the bound-suggested value obtained from the scaling choice of the analytical parameters $(\delta,\tau)$, rather than a policy-level optimal parameter. The drift parameter $\epsilon$ is already required to construct the TV-GP posterior, so using it to guide $\beta$ introduces no additional model parameter. The scaling $\beta_{\mathrm{opt}}=\mathcal{O}(\log(1/\epsilon))$ below is a bound-balancing recommendation rather than a validity condition; Theorem~\ref{thrm:improved_regret} holds for any constant $\beta$ satisfying \eqref{eq:beta_threshold}. The analysis of \citet{bogunovic2016time} specifies the time-growing schedule
\begin{equation}
\label{eq:thm_tv_beta}
\beta_t = 2 \log \frac{\pi^2 t^2}{2 \delta} + 2 d \log \!\left(r d b t^2 \sqrt{\log \frac{d a \pi^2 t^2}{2 \delta}}\right),
\end{equation}
where $(a,b,r,d)$ are spatial domain and smoothness constants, and $\delta$ is a failure probability. This schedule is determined by the spatial discretization argument and is not optimized as a function of the drift parameter $\epsilon$.

Our theorem does not prescribe $\beta$ as an explicit function of $\epsilon$ directly. Instead, $\epsilon$ enters through the time-decayed maximum information gain $\tilde\gamma_T$. Balancing the upper-bound terms through the analytical parameters $(\delta,\tau)$ therefore induces an $\epsilon$-dependent bound-suggested constant $\beta_{\mathrm{opt}}(\epsilon)$. To make this dependence explicit, suppose that, for the chosen spatial kernel and in the long-horizon regime where \(T\) exceeds the effective block length, the time-decayed information gain satisfies
\begin{equation}
\tilde\gamma_T = \widetilde{\mathcal{O}}\!\left(T \epsilon^{\alpha}\right),
\qquad \alpha>0.
\label{eq:gamma_scaling_eps}
\end{equation}
For fixed \(T\), this scaling should not be interpreted as an \(\epsilon\to0\) limit, since \(\tilde\gamma_T\) then approaches the static information gain. Appendix~\ref{app:max_info_gain} verifies such a scaling for common kernels through the standard growth rate of the time-invariant information gain. Taking equality in \eqref{eq:beta_threshold}, $\beta = 2(\log(6/\delta)+d\log\tau)$, while $C_2=(rdb/\tau)\sqrt{\log(3ad/\delta)}$. Thus the three per-round contributions in the expected regret bound are controlled by $\sqrt{\beta\,\tilde\gamma_T/T}$, $C_2$, and $\delta C_3$.

Under \eqref{eq:gamma_scaling_eps}, the information-gain contribution per round scales as $\sqrt{\beta}\,\epsilon^{\alpha/2}$ up to logarithmic factors. For $\epsilon\in(0,1)$, the choice below balances the polynomial dependence on $\epsilon$ across the information-gain, discretization, and failure-event terms and is sufficient to identify the logarithmic dependence of $\beta_{\mathrm{opt}}$ on $1/\epsilon$:
\begin{equation}
\delta = \epsilon^{\alpha/2},
\qquad
\tau = \left\lceil \epsilon^{-\alpha/2}\right\rceil.
\label{eq:delta_tau_choice}
\end{equation}

With this choice, the discretization and failure-event contributions match the same polynomial order, up to logarithmic factors:
\begin{align}
C_2
&=
\mathcal{O}\!\left(\epsilon^{\alpha/2}\sqrt{\log(\epsilon^{-1})}\right),
\label{eq:C2_scaling_eps}\\
\delta C_3
&=
\mathcal{O}\!\left(\epsilon^{\alpha/2}\sqrt{\log(\epsilon^{-1})}\right).
\label{eq:deltaC3_scaling_eps}
\end{align}

Substituting \eqref{eq:delta_tau_choice} into \eqref{eq:beta_threshold} at equality gives
\begin{equation}
\beta_{\mathrm{opt}}(\epsilon)
\approx
2\log(6)+\alpha(1+d)\log\!\Big(\frac{1}{\epsilon}\Big).
\label{eq:beta_log_eps_scaling}
\end{equation}
Across environments with the same spatial kernel but different drift rates, the bound suggests that $\beta_{\mathrm{opt}}$ grows logarithmically with $1/\epsilon$. This scaling should be interpreted together with the posterior variance: smaller $\epsilon$ also makes past data more informative and typically reduces $\sigma_{t-1}(x)$.

\subsection{Sampling frequency and effective drift}
\label{subsec:sampling_frequency}

The same calculation gives a simple rule for changing the sampling frequency. Suppose the underlying process evolves at drift rate $\epsilon$, but observations are collected every $\Delta\in\mathbb{N}^+$ time units. Then the correlation between successive observations is $(\sqrt{1-\epsilon})^\Delta$, which corresponds to the effective discrete-time drift
\begin{equation}
\epsilon_\Delta = 1-(1-\epsilon)^\Delta.
\label{eq:eps_delta}
\end{equation}
Larger $\Delta$ increases $\epsilon_\Delta$, so past observations become less relevant. Substituting this effective drift into \eqref{eq:beta_log_eps_scaling} yields the heuristic
\begin{equation}
\beta_{\mathrm{opt}}(\epsilon_\Delta)
\propto
\log\!\Big(\frac{1}{\epsilon_\Delta}\Big)
=
\log\!\Big(\frac{1}{1-(1-\epsilon)^\Delta}\Big).
\label{eq:beta_delta_heuristic}
\end{equation}
This adjustment reflects the fact that less frequent sampling increases the effective drift between observations and reduces the useful lifetime of past data.

\subsection{Average-reward perspective}
\label{sec:reward_maximizing_perspective}

The dynamic setting also changes the interpretation of the benchmark. Let $M_t := \sup_{x\in\mathcal S} f_t(x)$ denote the round-$t$ optimal reward. In the static case, $M_t$ is the same random variable for all $t$, so the benchmark remains tied to a single realization of the objective. Regret bounds must therefore account for the joint distribution of the realized function and the algorithm's decisions.

When $\epsilon>0$, this coupling is weaker in the long run. The sequence $(M_t)_{t\ge 1}$ is stationary and has exponentially decaying temporal dependence. As shown in Appendix~\ref{app:oracle_concentration},
\[
\frac{1}{T}\sum_{t=1}^T M_t
\xrightarrow[T\to\infty]{\Pr}
\mathbb{E}[M_1].
\]
Thus the oracle part of the regret concentrates around a deterministic quantity. Consequently, asymptotic average regret can be viewed as the gap between the deterministic benchmark $\mathbb{E}[M_1]$ and the algorithm's realized average reward. Cumulative regret remains the relevant finite-horizon criterion, but this perspective separates the intrinsic variability of the moving optimum from the performance loss due to the algorithm's decisions.

\section{Simulation study}
\label{sec:simulation_study}

We empirically evaluate constant-$\beta$ GP-UCB under the TV-GP model on the spatial domain $\mathcal{S}=[0,1]^2$. The domain is discretized using a $20\times 20$ grid, giving a state dimension of $N=400$. The latent function uses a radial basis function (RBF, squared exponential) kernel with lengthscale $l=0.1$ and prior variance $1.0$, while the observation noise variance is fixed at $\sigma^2=0.01$. We simulate $T=10{,}000$ rounds for each parameter pair $(\epsilon,\beta)$ and record the time-averaged regret $R_T/T$ over 100 independent trials. This horizon is used as a long-run stress test rather than as an application-specific benchmark; it is made feasible by the Kalman implementation described below. The simulation setup and compute resources are reported in Appendix~\ref{app:simulation_reproducibility}, and the accompanying code archive contains the scripts used to reproduce Figure~\ref{fig:simulation-study}.

\paragraph{Exact inference via Kalman filtering.}
A direct GP implementation would require repeatedly conditioning on the full observation history, making long-horizon simulations computationally expensive. To avoid this bottleneck, we exploit the autoregressive structure of the TV-GP prior. On the finite grid, the latent function can be represented as a state vector $f_t\in\mathbb{R}^{400}$, and the TV-GP model becomes a linear Gaussian state-space model. We therefore compute the GP posterior exactly using a Kalman filter. This reduces the per-round posterior update from the $\mathcal{O}(t^3)$ history-dependent cost of naive GP regression to a fixed-grid $\mathcal{O}(N^2)$ recursion. The full recursion is given in Appendix~\ref{app:kalman_filter_simulations}.

\paragraph{Results.}
The left panel of Figure~\ref{fig:simulation-study} displays the average regret $R_T/T$ as a function of the constant exploration parameter $\beta$ for three drift rates, $\epsilon\in\{10^{-1},10^{-2},10^{-3}\}$. For each drift rate, let $\hat{\beta}_{\mathrm{opt}}$ denote the value of $\beta$ on the tested grid that minimizes the empirical average regret. The main empirical test is whether $\hat{\beta}_{\mathrm{opt}}$ increases with $-\log(\epsilon)$ as predicted by Section~\ref{subsec:Opt_beta_eps}. The empirical minimizer shifts toward larger values as $\epsilon$ decreases: $\hat{\beta}_{\mathrm{opt}}\approx 2.12$ for $\epsilon=0.1$, $\hat{\beta}_{\mathrm{opt}}\approx 4.34$ for $\epsilon=0.01$, and $\hat{\beta}_{\mathrm{opt}}\approx 6.36$ for $\epsilon=0.001$.

Because the exact theoretical schedule in Equation~\eqref{eq:thm_tv_beta} is overly conservative in practice \citep{srinivas2012information, bogunovic2016time}, empirical evaluations often use tuned logarithmic schedules. Following \citet{bogunovic2016time}, we compare against $\beta_t=c_1\log(c_2t)$ with $c_1=0.8$ and $c_2=4$. Across all three drift rates, a broad interval of constant $\beta$ values achieves lower average regret than this time-growing heuristic over the $10{,}000$-step horizon.

\begin{figure}[t]
    \centering
    \includegraphics[width=\linewidth]{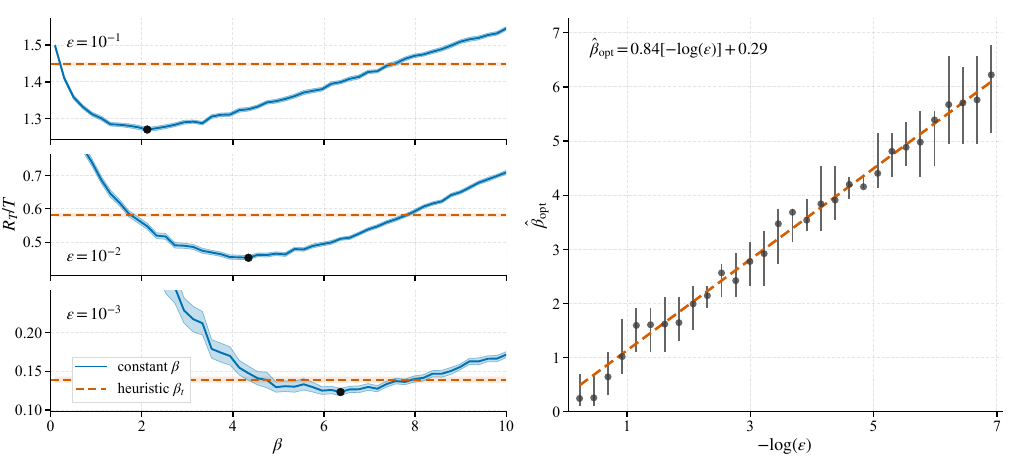}
    \caption{
    Simulation study for constant-$\beta$ GP-UCB under the time-varying GP model.
    Left: time-averaged regret $R_T/T$ over $T=10{,}000$ rounds as a function of the constant exploration parameter $\beta$ for three drift rates $\epsilon\in\{10^{-1},10^{-2},10^{-3}\}$; shaded bands denote 95\% confidence intervals over 100 independent trials, and dashed horizontal lines show the logarithmic time-varying-$\beta_t$ heuristic. 
    Right: bootstrapped estimates of the empirical regret-minimizing constant $\hat{\beta}_{\mathrm{opt}}$ versus $-\log(\epsilon)$ across the full $\epsilon$ grid. Points denote mean bootstrap estimates, vertical bars denote 95\% percentile intervals, and the dashed line shows a least-squares linear fit used to visualize the log-linear trend.
    }
    \label{fig:simulation-study}
\end{figure}

The right panel of Figure~\ref{fig:simulation-study} examines this dependence across the full grid of drift rates. The relationship between $\hat{\beta}_{\mathrm{opt}}$ and $-\log(\epsilon)$ is approximately linear, matching the qualitative prediction $\beta_{\mathrm{opt}}(\epsilon)\propto \log(1/\epsilon)$ from Equation~\eqref{eq:beta_log_eps_scaling}. This agreement is nontrivial: the theory balances terms in an upper bound, and such bounds need not match the exploration scale that performs best in finite simulations.

We do not interpret the fitted slope as an estimate of the information-gain exponent $\alpha$. Such an interpretation would require the analytical discretization choice $\tau=\tau(\epsilon)$ and the global spatial union-bound factor in Equation~\eqref{eq:beta_log_eps_scaling} to be tight. Neither condition holds here: the simulation uses a fixed $20\times20$ action grid, and the adaptive policy concentrates its evaluations near high-reward regions rather than exploring the full domain uniformly. Thus the slope is best viewed as empirical support for the logarithmic dependence of the regret-minimizing constant on the drift rate, not as a quantitative estimate of the information-gain scaling.

\section{Conclusion}
\label{sec:conclusion}

Existing theoretical guarantees for Gaussian process bandits in time-varying environments typically require exploration schedules that grow with time. We showed that, under the TV-GP drift model, a constant exploration parameter can instead be analyzed using local confidence events at each round rather than a single horizon-wide confidence event. The resulting expected-regret bound has a drift-dependent leading term controlled by the time-varying information gain, and the same local-confidence analysis yields realized-regret guarantees with $\beta$ constant in $t$ and $T$.

We also derived a sharper bound on the time-varying maximum information gain. If $\gamma_N=\widetilde{\mathcal O}(N^q)$, the resulting persistent-drift average-regret rate is $\widetilde{\mathcal O}(\epsilon^{(1-q)/(4-q)})$; for the squared exponential kernel this gives $\widetilde{\mathcal O}(\epsilon^{1/4})$. These rates apply when the optimized block length fits within the horizon and are not fixed-$T$, $\epsilon\to0$ statements. Separately, balancing the remaining terms in the regret bound suggests a bound-suggested constant exploration parameter with $\beta_{\mathrm{opt}}(\epsilon)\propto\log(1/\epsilon)$. This provides a simple tuning principle when changing the sampling frequency changes the effective drift between observations. The same Markov structure also implies exponential covariance decay for the oracle-value process, so long-run average performance can be compared against a deterministic benchmark.

Several extensions remain open. Most importantly, the present analysis relies on the geometric temporal kernel induced by the Markov drift model. Extending constant-exploration guarantees to more general temporal kernels would clarify which forms of nonstationarity are sufficient to prevent posterior overconfidence. Another important direction is to remove the assumption that the drift parameter is known and fixed, allowing $\beta$ to be adapted online when the rate of temporal variation is unknown or changes over time. Finally, sharper concentration tools for the realized regret of TV-GP-UCB could improve the current high-probability bounds and better reflect the empirical stability observed in the simulations.

\begin{ack}
The first author gratefully acknowledges support from the NWO Spinoza Prize awarded to Aad van der Vaart.
\end{ack}

\bibliographystyle{plainnat}
\bibliography{sample}

\newpage

\appendix

\setcounter{theorem}{0}
\renewcommand{\thetheorem}{\thesection.\arabic{theorem}}
\renewcommand{\theHtheorem}{appendix.\thesection.\arabic{theorem}}

\section{Improved Expected Regret Bound}
\label{app:Improved_regret}

This appendix proves Theorem~\ref{thrm:improved_regret}. The argument proceeds in three steps: (i) bounding the instantaneous regret $r_t$ for Algorithm~\ref{alg:TV} conditionally on a high-probability event; (ii) bounding the expected regret on the complement event via concentration inequalities for the extrema of Gaussian processes; and (iii) summing the expected regret over $t$ and bounding the sum of predictive variances by the maximum information gain.

\subsection{Algorithm-specific bound}
\label{app:subsec:alg_specific}

\begin{lemma}[Instantaneous regret under constant $\beta$]
\label{lem:alg_spec_bnd}
Consider the setting of Theorem~\ref{thrm:improved_regret}. Fix $\delta\in(0,1)$ and $\tau\in\mathbb{N}^+$. Let $\mathcal{S}^*\subset\mathcal{S}$ be a discretization with $|\mathcal{S}^*|\le\tau^d$ satisfying
\begin{equation}
\label{eq:discretization_app}
\|x-[x]\|_1 \le \frac{rd}{\tau},\qquad \text{for all } x\in\mathcal{S},
\end{equation}
where $[x]\in\mathcal{S}^*$ is an $\ell^1$-closest grid point. Let $\beta$ and $C_2$ be defined as in Theorem~\ref{thrm:improved_regret}. If $x_t$ is selected according to Algorithm~\ref{alg:TV}, then for each fixed $t$, the instantaneous regret satisfies
\begin{equation}
\label{eq:inst_regret_app}
r_t = f_t(x_t^*)-f_t(x_t)\;\le\; 2\sqrt{\beta}\,\sigma_{t-1}(x_t) + C_2
\end{equation}
with probability at least $1-\delta$.
\end{lemma}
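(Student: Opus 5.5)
We build, for the fixed round $t$, a local confidence event $\mathcal{E}_t$ as the intersection of three sub-events, each failing with probability at most $\delta/3$. On $\mathcal{E}_t$ the bound \eqref{eq:inst_regret_app} then follows from the usual GP-UCB chain. The key observation is that, conditionally on $\mathcal{H}_{t-1}$, the function $f_t$ is a Gaussian process with mean $\mu_{t-1}$ and pointwise variance $\sigma_{t-1}^2$. This holds because the data are generated under the same separable kernel \eqref{eq:sep_kernel}, so \eqref{eq:post_mean}--\eqref{eq:post_var} give the exact conditional law. Since $x_t$ is $\mathcal{H}_{t-1}$-measurable, for any fixed $x$ (or for $x=x_t$) we have the Gaussian tail bound
\[
\mathbb{P}\big\{|f_t(x)-\mu_{t-1}(x)|>\sqrt{\beta'}\,\sigma_{t-1}(x)\,\big|\,\mathcal{H}_{t-1}\big\}\le e^{-\beta'/2}.
\]
Taking expectations removes the conditioning.

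The three sub-events are as follows.

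(a) $\mathcal{A}_t=\{|f_t(x_t)-\mu_{t-1}(x_t)|\le\sqrt{\beta}\sigma_{t-1}(x_t)\}$. It fails with probability at most $e^{-\beta/2}\le \delta/(6\tau^d)\le\delta/3$ by \eqref{eq:beta_threshold}.

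(b) $\mathcal{B}_t=\{|f_t(x)-\mu_{t-1}(x)|\le\sqrt{\beta}\sigma_{t-1}(x)\ \forall x\in\mathcal{S}^*\}$. A union bound over $|\mathcal{S}^*|\le\tau^d$ points gives failure probability at most $\tau^d e^{-\beta/2}\le\delta/6$, again because $\beta\ge 2\log(6\tau^d/\delta)$. The grid is deterministic, so no measurability issue arises.

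(c) $\mathcal{C}_t=\{\sup_x|\partial f_t/\partial x^{(j)}|\le L\ \forall j\}$ with $L=b\sqrt{\log(3ad/\delta)}$. Assumption~\ref{ass:smoothness} and a union bound over $j$ give failure probability at most $da\,e^{-\log(3ad/\delta)}=\delta/3$.

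Altogether $\mathbb{P}(\mathcal{E}_t^c)\le\delta$.

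On $\mathcal{E}_t$ we argue as follows. By $\mathcal{C}_t$ and convexity of $\mathcal{S}$, the mean value theorem along the segment gives $|f_t(x)-f_t(x')|\le L\|x-x'\|_1$. Combined with \eqref{eq:discretization_app}, this yields $f_t(x_t^*)\le f_t([x_t^*])+Lrd/\tau=f_t([x_t^*])+C_2$. By $\mathcal{B}_t$, $f_t([x_t^*])\le\mu_{t-1}([x_t^*])+\sqrt\beta\sigma_{t-1}([x_t^*])$. Since $x_t$ maximizes the UCB over all of $\mathcal{S}\supseteq\mathcal{S}^*$, this is at most $\mu_{t-1}(x_t)+\sqrt\beta\sigma_{t-1}(x_t)$. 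By $\mathcal{A}_t$ the latter is at most $f_t(x_t)+2\sqrt\beta\sigma_{t-1}(x_t)$. Chaining these inequalities gives $r_t\le 2\sqrt\beta\,\sigma_{t-1}(x_t)+C_2$.

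The main obstacle is step (a). There we must justify that the Gaussian concentration applies at the data-dependent point $x_t$. This requires checking that $f_t\mid\mathcal{H}_{t-1}$ is exactly $\mathcal{GP}$ with the stated posterior moments even though the design is adaptive. The standard argument conditions sequentially: each $x_i$ is a function of $\mathcal{H}_{i-1}$ (with measurable tie-breaking in the argmax), so the likelihood factorizes and Gaussian conditioning still applies. Because $\beta$ only needs a union over the grid plus one point, the constants close exactly with the $\log(6/\delta)$ in \eqref{eq:beta_threshold}. No union over $t$ is needed, since the statement is per fixed $t$.
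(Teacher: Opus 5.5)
Your proposal is correct and follows essentially the same route as the paper: the same three-event local confidence set (concentration at the $\mathcal{H}_{t-1}$-measurable point $x_t$, a union bound over the $\tau^d$ grid points, and the derivative-tail Lipschitz event with $L=b\sqrt{\log(3ad/\delta)}$), followed by the identical UCB chain through $[x_t^*]$. Your explicit appeal to convexity of $\mathcal{S}$ for the mean-value step and your tighter accounting ($\tau^d e^{-\beta/2}\le\delta/6$) are slightly more careful than the paper's write-up, but they do not change the argument.
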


\begin{proof}
We bound the instantaneous regret by conditioning on the intersection of three high-probability events at a fixed time $t$. Let $\mathcal{E}_t = E_{1,t} \cap E_{2,t} \cap E_{3,t}$, defined as follows:
\begin{enumerate}[label=(\roman*), leftmargin=5ex]
    \item \textbf{Pointwise concentration:} Given the history $\mathcal{H}_{t-1}$, the query point $x_t$ is deterministic. Because $f_t(x_t) \sim \mathcal{N}(\mu_{t-1}(x_t), \sigma_{t-1}^2(x_t))$, standard Gaussian tail bounds ensure the event
    \begin{equation}
        |f_t(x_t)-\mu_{t-1}(x_t)| \le \sqrt{\beta}\,\sigma_{t-1}(x_t)
    \end{equation}
    holds with probability at least $1-\delta/3$ under the specified constant $\beta$.
    \item \textbf{Uniform grid concentration:} Applying a union bound over the $|\mathcal{S}^*| \le \tau^d$ elements of the discretization, the event
    \begin{equation}
        |f_t(x)-\mu_{t-1}(x)| \le \sqrt{\beta}\,\sigma_{t-1}(x), \quad \text{for all } x\in\mathcal{S}^*
    \end{equation}
    holds with probability at least $1-\delta/3$.
    \item \textbf{Lipschitz continuity:} By the derivative tail assumption \eqref{eq:smoothness} and a union bound over the $d$ spatial dimensions, setting $L = b\sqrt{\log(3ad/\delta)}$ ensures the event
    \begin{equation}
        |f_t(x)-f_t(x')| \le L\|x-x'\|_1, \quad \text{for all } x,x'\in\mathcal{S}
    \end{equation}
    holds with an unconditional probability of at least $1-\delta/3$ under the prior measure.
\end{enumerate}

Because events (i) and (ii) hold conditionally with probability $1-\delta/3$ for any history $\mathcal{H}_{t-1}$, the law of total expectation guarantees they also hold with an unconditional probability of at least $1-\delta/3$. Applying a union bound over the unconditional probabilities of these three events yields $\mathbb{P}(\mathcal{E}_t^c) \le \delta$, guaranteeing the joint event holds unconditionally with $\mathbb{P}(\mathcal{E}_t) \ge 1-\delta$. 
Conditioned on the realization of $\mathcal{E}_t$, the continuity event (iii) and the spatial discretization bound \eqref{eq:discretization_app} guarantee that the approximation error for any point in the domain is bounded by $L(rd/\tau) = C_2$.

Evaluating this spatial bound at the instantaneous optimum $x_t^*$ yields $f_t(x_t^*) \le f_t([x_t^*]) + C_2$. Because the projection $[x_t^*] \in \mathcal{S}^*$, we apply the uniform concentration event (ii) to bound $f_t([x_t^*])$, followed by the selection rule of Algorithm~\ref{alg:TV}, which maximizes the upper confidence bound at $x_t$:
\begin{align}
    f_t(x_t^*) &\le \mu_{t-1}([x_t^*]) + \sqrt{\beta}\,\sigma_{t-1}([x_t^*]) + C_2 \nonumber \\
    &\le \mu_{t-1}(x_t) + \sqrt{\beta}\,\sigma_{t-1}(x_t) + C_2. \label{eq:alg_spec_ucb_step}
\end{align}
Finally, applying the pointwise concentration event (i) at \(x_t\) yields
\[
\mu_{t-1}(x_t)
\le
f_t(x_t)+\sqrt{\beta}\,\sigma_{t-1}(x_t).
\]
Substituting this inequality into \eqref{eq:alg_spec_ucb_step} bounds the instantaneous regret:
\begin{equation}
    r_t = f_t(x_t^*) - f_t(x_t) \le 2\sqrt{\beta}\,\sigma_{t-1}(x_t) + C_2,
\end{equation}
which concludes the proof.
\end{proof}

\subsection{Algorithm-agnostic tail control}
\label{app:subsec:alg_agnostic}

The span \(W_t\) can be written as
\[
W_t=\sup_{x,y\in\mathcal S}\big(f_t(x)-f_t(y)\big).
\]
Thus \(W_t\) is the supremum of a centered Gaussian process indexed by
\(\mathcal S\times\mathcal S\). Since the sample paths are almost surely bounded on the compact domain and
\(k_S(x,x)\le 1\), Borell--TIS implies that \(W_t-\mathbb E[W_t]\) has a sub-Gaussian upper tail with a finite variance proxy. We denote this proxy by \(\sigma_w^2\).

\begin{lemma}[Tail expectation bound]
\label{lem:tail_expectation_span}
Assume \(W_t-\mu_w\) has the sub-Gaussian upper tail
\[
\mathbb P\{W_t-\mu_w>u\}
\le
\exp\!\left(-\frac{u^2}{2\sigma_w^2}\right)
\qquad
\text{for all }u>0.
\]
Then, for every event \(A\) with \(\mathbb P(A)\le \delta\),
\[
\mathbb E[W_t\mathbf 1_A]
\le
\delta\left(\mu_w+2\sigma_w\sqrt{2\log(e/\delta)}\right).
\]
\end{lemma}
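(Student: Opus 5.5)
The plan is to split $W_t$ at its mean and handle the excess through the tail bound. Since $W_t\ge 0$, we have $\mu_w\ge 0$. Pointwise,
\[
W_t\mathbf 1_A\le \mu_w\mathbf 1_A+(W_t-\mu_w)_+\mathbf 1_A ,
\]
so
\[
\mathbb E[W_t\mathbf 1_A]\le \delta\mu_w+\mathbb E\big[(W_t-\mu_w)_+\mathbf 1_A\big].
\]
The first term already matches the claim. It remains to show that the excess term is at most $2\delta\sigma_w\sqrt{2\log(e/\delta)}$.

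For the excess, I would use the layer-cake formula:
\[
\mathbb E\big[(W_t-\mu_w)_+\mathbf 1_A\big]=\int_0^\infty \mathbb P\big(\{W_t-\mu_w>u\}\cap A\big)\,du
\le \int_0^\infty \min\Big\{\delta,\ \exp\!\Big(-\tfrac{u^2}{2\sigma_w^2}\Big)\Big\}\,du .
\]
The pointwise bound $\min\{\mathbb P(A),\mathbb P(W_t-\mu_w>u)\}$ is the key device: it lets the small probability of $A$ cap the integrand for moderate $u$, while the sub-Gaussian tail controls large $u$. I would split the integral at the crossover $u_0=\sigma_w\sqrt{2\log(1/\delta)}$, where $\exp(-u_0^2/2\sigma_w^2)=\delta$.

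On $[0,u_0]$ the integrand is at most $\delta$, contributing $\delta u_0$. On $[u_0,\infty)$, I would use the Gaussian tail estimate
\[
\int_{u_0}^\infty e^{-u^2/2\sigma_w^2}\,du\le \sigma_w\sqrt{\pi/2}\,e^{-u_0^2/2\sigma_w^2}=\delta\sigma_w\sqrt{\pi/2}.
\]
This follows from $\mathbb P(Z>s)\le \tfrac12 e^{-s^2/2}$ for a standard normal $Z$ and $s\ge0$. I use this form rather than the Mills-ratio bound $\sigma_w^2/u_0$ because the latter blows up as $\delta\to1$. Together the two pieces give
\[
\mathbb E\big[(W_t-\mu_w)_+\mathbf 1_A\big]\le \delta\sigma_w\Big(\sqrt{2\log(1/\delta)}+\sqrt{\pi/2}\Big).
\]

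The final step is elementary. Since $\log(e/\delta)=1+\log(1/\delta)\ge 1$, we have $\sqrt{2\log(1/\delta)}\le\sqrt{2\log(e/\delta)}$ and $\sqrt{\pi/2}<\sqrt2\le\sqrt{2\log(e/\delta)}$. The bracket is therefore at most $2\sqrt{2\log(e/\delta)}$, which yields the stated bound. The main obstacle is choosing a tail estimate that stays uniform over all $\delta\in(0,1)$. Splitting at $u_0$ and using the $\tfrac12 e^{-s^2/2}$ Gaussian bound (rather than Mills ratio) resolves this, and it is also why the constant uses $\log(e/\delta)$ instead of $\log(1/\delta)$.
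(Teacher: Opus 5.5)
Your proof is correct and follows essentially the same route as the paper: split off $\mu_w\mathbb P(A)$, then bound $\mathbb E[(W_t-\mu_w)_+\mathbf 1_A]$ by using $\mathbb P(A)\le\delta$ below a threshold and integrating the sub-Gaussian tail above it. The differences are minor. The paper truncates at $q=\sigma_w\sqrt{2\log(e/\delta)}\ge\sigma_w$ via $X\mathbf 1_A\le q\mathbf 1_A+X\mathbf 1_{\{X>q\}}$, and this choice is exactly what makes the Mills-ratio step $\sigma_w^2/q\le q$ valid, so the $e$ inside the logarithm plays that role there. You instead split at the crossover $\sigma_w\sqrt{2\log(1/\delta)}$ and use $\mathbb P(Z>s)\le\tfrac12 e^{-s^2/2}$, where the $e$ serves to absorb $\sqrt{\pi/2}$; this gives a slightly sharper intermediate constant before reducing to the stated bound.
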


\begin{proof}
Write \(X=(W_t-\mu_w)_+\). Then
\[
\mathbb E[W_t\mathbf 1_A]
\le
\mu_w\mathbb P(A)+\mathbb E[X\mathbf 1_A].
\]
Let \(q=\sigma_w\sqrt{2\log(e/\delta)}\). We split
\[
\mathbb E[X\mathbf 1_A]
\le
q\mathbb P(A)+\mathbb E[X\mathbf 1_{\{X>q\}}].
\]
Using the sub-Gaussian tail and integrating,
\[
\mathbb E[X\mathbf 1_{\{X>q\}}]
=
q\mathbb P(X>q)+\int_q^\infty \mathbb P(X>u)\,du.
\]
Since \(q=\sigma_w\sqrt{2\log(e/\delta)}\), we have
\[
\mathbb P(X>q)
\le
\exp\!\left(-\frac{q^2}{2\sigma_w^2}\right)
=
\frac{\delta}{e}.
\]
Moreover, the standard Gaussian tail integral bound gives
\[
\int_q^\infty \exp\!\left(-\frac{u^2}{2\sigma_w^2}\right)\,du
\le
\frac{\sigma_w^2}{q}\exp\!\left(-\frac{q^2}{2\sigma_w^2}\right)
=
\frac{\sigma_w^2}{q}\frac{\delta}{e}
\le
q\frac{\delta}{e},
\]
where the last inequality uses \(q\ge \sigma_w\). Therefore,
\[
\mathbb E[X\mathbf 1_{\{X>q\}}]
\le
2q\frac{\delta}{e}
\le
q\delta.
\]
Combining this with \(q\mathbb P(A)\le q\delta\) yields
\[
\mathbb E[X\mathbf 1_A]\le 2q\delta.
\]
\end{proof}

\subsection{Expected cumulative regret}
\label{app:subsec:sum_regret}

\begin{proof}[Proof of Theorem~\ref{thrm:improved_regret}]
Fix $\delta\in(0,1)$ and $\tau\in\mathbb{N}^+$, and define $\beta$ and $C_2$ as in Theorem~\ref{thrm:improved_regret}. By Lemma~\ref{lem:alg_spec_bnd}, for each time step $t$ there exists a confidence event $\mathcal{E}_t$ with $\mathbb{P}(\mathcal{E}_t)\ge 1-\delta$ such that, conditioned on $\mathcal{E}_t$, the instantaneous regret satisfies $r_t \le 2\sqrt{\beta}\,\sigma_{t-1}(x_t)+C_2$. 

Because instantaneous regret is always non-negative, we can bound it pathwise over the entire probability space by introducing the spatial span $W_t = \max_{x} f_t(x) - \min_{x} f_t(x)$ as the worst-case penalty when the confidence event fails:
\begin{equation}
\label{eq:pathwise_rt_bound}
r_t \;\le\; 2\sqrt{\beta}\,\sigma_{t-1}(x_t) + C_2 + W_t \mathbbm{1}_{\mathcal{E}_t^c},
\end{equation}
where $\mathbbm{1}_{\mathcal{E}_t^c}$ is the indicator for the complement event. Taking the unconditional expectation of both sides, and applying Lemma~\ref{lem:tail_expectation_span} to bound the expected tail contribution $\mathbb{E}[W_t \mathbbm{1}_{\mathcal{E}_t^c}] \le \delta C_3$, yields the valid deterministic bound:
\begin{equation}
\label{eq:Er_t_bound_app}
\mathbb{E}[r_t] \;\le\; 2\sqrt{\beta}\,\mathbb{E}[\sigma_{t-1}(x_t)] + C_2 + \delta C_3.
\end{equation}
Summing \eqref{eq:Er_t_bound_app} over $t=1,\dots,T$, we obtain:
\begin{equation}
\mathbb{E}[R_T] \;\le\; 2\sqrt{\beta}\,\mathbb{E}\Bigg[\sum_{t=1}^T \sigma_{t-1}(x_t)\Bigg] + T C_2 + T \delta C_3.
\end{equation}

To bound the sum of the posterior standard deviations, we apply the Cauchy-Schwarz inequality strictly inside the expectation:
\begin{equation}
\mathbb{E}\Bigg[\sum_{t=1}^T \sigma_{t-1}(x_t)\Bigg] \;\le\; \mathbb{E}\Bigg[\sqrt{T\sum_{t=1}^T \sigma_{t-1}^2(x_t)}\Bigg].
\end{equation}
We then bound the sum of predictive variances by the maximum information gain. For every realized query sequence $x_{1:T}$, the Schur-complement determinant identity gives
\begin{equation}
\frac{1}{2}\sum_{t=1}^T \log\big(1+\sigma^{-2}\sigma_{t-1}^2(x_t)\big)
=
\frac{1}{2}\log\det\big(I_T+\sigma^{-2}\tilde K_T(x_{1:T})\big)
\le
\tilde\gamma_T.
\end{equation}
Combining this pathwise identity with the variance-to-logarithm inequality from \citet[Lemma~5.4]{srinivas2012information} under the assumption that $k_S(x,x)\le 1$ yields almost surely
\begin{equation}
\sum_{t=1}^T \sigma_{t-1}^2(x_t)
\le
\frac{2}{\log(1+\sigma^{-2})}\,\tilde\gamma_T.
\end{equation}

Defining $C_1 = 8/\log(1+\sigma^{-2})$, we have $\sum_{t=1}^T 4\beta\,\sigma_{t-1}^2(x_t) \le C_1\beta\,\tilde\gamma_T$. Because $\tilde\gamma_T$ is defined as the maximum information gain over all possible sequences, it is a deterministic upper bound. The expectation of a deterministic bound is simply the bound itself. Substituting this into the cumulative sum yields the final result:
\begin{equation}
\mathbb{E}[R_T] \le \sqrt{C_1 T \beta\,\tilde\gamma_T} + T C_2 + T \delta C_3.
\end{equation}
\end{proof}

\noindent The explicit dependence of $\tilde{\gamma}_T$ on $T$ and $\epsilon$ for the time-decayed kernel is detailed in Appendix~\ref{app:max_info_gain}.

\section{Maximum Information Gain}
\label{app:max_info_gain}

\begin{theorem}[Time-varying maximum information gain]
\label{thm:TV_inf_gain_fixed}
Let $\tilde \gamma_T := \sup_{x_{1:T}\in\mathcal S^T} \tfrac12 \log\det(I_T+\sigma^{-2}\tilde K_T)$ denote the maximum information gain under the separable spatio-temporal kernel $\tilde k((x,t),(x',t')) = k_S(x,x')\,\rho^{|t-t'|}$ with $\rho=\sqrt{1-\epsilon}\in[0,1)$.
Assume $|k_S(x,x')|\le 1$ for all $x,x'\in\mathcal S$.
Then for any block length $\tilde N\in\mathbb N^+$,
\begin{equation}
\tilde\gamma_T
\;\le\;
\Big(\Big\lceil\tfrac{T}{\tilde N}\Big\rceil\Big)
\left(
\gamma_{\tilde N}
+
\frac{\sigma^{-2}\epsilon\,\tilde N\sqrt{\tilde N^2-1}}{2\sqrt{3}}
\sqrt{\gamma_{\tilde N}}
\right)
\;\le\;
\Big(\tfrac{T}{\tilde N}+1\Big)
\left(
\gamma_{\tilde N}
+
\frac{\sigma^{-2}\epsilon\,\tilde N\sqrt{\tilde N^2-1}}{2\sqrt{3}}
\sqrt{\gamma_{\tilde N}}
\right),
\label{eq:time_varying_info}
\end{equation}
where $\gamma_{\tilde N} := \sup_{x_{1:\tilde N}\in\mathcal S^{\tilde N}} \tfrac12 \log\det(I_{\tilde N}+\sigma^{-2}K_{\tilde N})$ is the standard time-invariant maximum information gain for $f\sim \mathcal{GP}(0,k_S)$.
\end{theorem}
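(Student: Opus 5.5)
Partition the horizon $\{1,\dots,T\}$ into $B=\lceil T/\tilde N\rceil$ consecutive blocks of length at most $\tilde N$, and first reduce to a single block. For any Gaussian vector $f_A$ with independent noise, the mutual information $I(y_A;f_A)$ is subadditive over a partition of the observations: by the chain rule, $I(y_A;f_A)=\sum_j I(y_{A_j};f_A\mid y_{A_1},\dots,y_{A_{j-1}})$. Each term is at most the unconditional information $I(y_{A_j};f_{A_j})$, because conditioning on earlier blocks only reduces the posterior covariance of $f_{A_j}$ and $\log\det(I+\sigma^{-2}\Sigma)$ is monotone in $\Sigma$. Equivalently, Fischer's inequality gives $\det(I+\sigma^{-2}\tilde K_T)\le\prod_j\det(I+\sigma^{-2}\tilde K^{(j)})$ for the diagonal blocks $\tilde K^{(j)}$ of the positive definite matrix $I+\sigma^{-2}\tilde K_T$. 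Hence $\tilde\gamma_T\le B\cdot\max_{x_{1:\tilde N}}\tfrac12\log\det(I+\sigma^{-2}\tilde K_{\tilde N})$. Since blocks shorter than $\tilde N$ are handled by monotonicity of the log-determinant in the design size, the first inequality reduces to a one-block bound. The second inequality is then immediate from $\lceil T/\tilde N\rceil\le T/\tilde N+1$.

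For a single block, write $\tilde K_{\tilde N}=K\circ P$, where $K=[k_S(x_i,x_j)]$, $P=[\rho^{|i-j|}]$ and $\circ$ is the Hadamard product. Decompose $\tilde K_{\tilde N}=K+E$ with $E=K\circ(P-\mathbf 1\mathbf 1^\top)$. Let $M=I+\sigma^{-2}K\succeq I$. Then
\[
\log\det(M+\sigma^{-2}E)=\log\det M+\log\det(I+\sigma^{-2}M^{-1}E).
\]
Using concavity, $\log\det(M+\sigma^{-2}E)\le\log\det M+\sigma^{-2}\operatorname{tr}(M^{-1}E)$. The first term gives $\gamma_{\tilde N}$ after halving and maximizing. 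For the perturbation term, apply Cauchy--Schwarz in Frobenius form: $\operatorname{tr}(M^{-1}E)\le\|M^{-1}-I\|_F\,\|E\|_F$ plus $\operatorname{tr}(E)$. Here $\operatorname{tr}E=0$ because the diagonal of $P-\mathbf 1\mathbf 1^\top$ vanishes. This leaves $\operatorname{tr}((M^{-1}-I)E)$.

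Next bound $\|E\|_F$. Since $|k_S|\le1$ and $1-\rho^{m}\le m(1-\rho)\le m\epsilon$ (using $1-\sqrt{1-\epsilon}\le\epsilon$), we get
\[
\|E\|_F^2\le\epsilon^2\sum_{i,j}|i-j|^2=\epsilon^2\,\frac{\tilde N^2(\tilde N^2-1)}{6}.
\]
This accounts for the factor $\tilde N\sqrt{\tilde N^2-1}/\sqrt6$. To match the constant $1/(2\sqrt3)$ after the factor $\tfrac12$, we need $\|I-M^{-1}\|_F\le\sqrt{2\gamma_{\tilde N}}$. This should follow from the eigenvalues: if $\lambda_i$ are the eigenvalues of $\sigma^{-2}K$, then $\|I-M^{-1}\|_F^2=\sum_i(\lambda_i/(1+\lambda_i))^2\le\sum_i\log(1+\lambda_i)\le 2\gamma_{\tilde N}$, since $(x/(1+x))^2\le\log(1+x)$ for $x\ge0$. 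Combining these gives the per-block bound $\gamma_{\tilde N}+\tfrac12\sigma^{-2}\epsilon\,\tilde N\sqrt{\tilde N^2-1}\,6^{-1/2}\sqrt{2\gamma_{\tilde N}}$, and $\tfrac12\cdot\sqrt2/\sqrt6=1/(2\sqrt3)$, which is the stated coefficient.

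The main obstacle is the perturbation step. The concavity (first-order) bound must be applied with $E$ possibly indefinite; this is valid because $\log\det$ is concave on the positive definite cone and $M+\sigma^{-2}E$ remains positive definite, being the kernel matrix plus identity. The other delicate point is using the vanishing diagonal of $E$ to replace $M^{-1}$ by $M^{-1}-I$, which is what turns the bound into $\sqrt{\gamma_{\tilde N}}$ rather than $\sqrt{\tilde N}$. I would verify the scalar inequality $(x/(1+x))^2\le\log(1+x)$ directly: both sides vanish at $0$, and comparing derivatives gives $2x/(1+x)^3\le 1/(1+x)$.
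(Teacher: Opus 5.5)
Your proposal is correct and follows the paper's proof essentially step for step. It uses the same block decomposition, the same split $\tilde K = K + K\circ(P-\mathbf 1\mathbf 1^\top)$ with $M=I+\sigma^{-2}K$, the same zero-trace step replacing $M^{-1}$ by $M^{-1}-I$, and the same Frobenius bounds $\|M^{-1}-I\|_F\le\sqrt{2\gamma_{\tilde N}}$ and $\|E\|_F\le\epsilon\tilde N\sqrt{\tilde N^2-1}/\sqrt6$. The only differences are cosmetic and equivalent: you use the tangent-plane inequality for the concave $\log\det$ where the paper uses the congruence $M^{1/2}(I+B)M^{1/2}$ with $\log(1+u)\le u$ on eigenvalues, and Fischer's inequality (or posterior-covariance monotonicity) where the paper uses an entropy argument for block subadditivity.
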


\begin{proof}
\textbf{Step 1 (block decomposition).}
Partition the sequence $\{1,\dots,T\}$ into $m=\lceil T/\tilde N\rceil$ consecutive blocks, each of length at most $\tilde N$.
This partition is used only in the analysis; the GP posterior is neither reset nor truncated at block boundaries.
Let $(\mathbf f^{(i)},\mathbf y^{(i)})$ denote the latent function values and noisy observations within block $i$.
By the chain rule for mutual information,
\[
I(\mathbf f_T;\mathbf y_T)
=
\sum_{i=1}^m I(\mathbf f_T;\mathbf y^{(i)} \mid \mathbf y^{(<i)}).
\]
Because $\mathbf y^{(i)} = \mathbf f^{(i)} + \mathbf z^{(i)}$ with $\mathbf z^{(i)}$ drawn independently of $(\mathbf f_T,\mathbf y^{(<i)})$, the variables satisfy the Markov property $\mathbf y^{(i)} \perp\!\!\!\perp (\mathbf f_T\setminus \mathbf f^{(i)}) \mid \mathbf f^{(i)}$.
Consequently, $I(\mathbf f_T;\mathbf y^{(i)} \mid \mathbf y^{(<i)}) = I(\mathbf f^{(i)};\mathbf y^{(i)} \mid \mathbf y^{(<i)})$. 

Furthermore, because the additive Gaussian noise $\mathbf z^{(i)}$ is independent of the sequence history, the conditional entropy simplifies to $H(\mathbf{y}^{(i)} \mid \mathbf{f}^{(i)}, \mathbf{y}^{(<i)}) = H(\mathbf{z}^{(i)})$. While conditioning reduces entropy ($H(\mathbf{y}^{(i)} \mid \mathbf{y}^{(<i)}) \le H(\mathbf{y}^{(i)})$), it does not universally reduce mutual information. However, due to this independent noise structure, the explicit algebraic cancellation holds:
\begin{equation}
I(\mathbf{f}^{(i)};\mathbf{y}^{(i)} \mid \mathbf{y}^{(<i)}) = H(\mathbf{y}^{(i)} \mid \mathbf{y}^{(<i)}) - H(\mathbf{z}^{(i)}) \le H(\mathbf{y}^{(i)}) - H(\mathbf{z}^{(i)}) = I(\mathbf{f}^{(i)};\mathbf{y}^{(i)}).
\end{equation}
It follows that
\[
I(\mathbf f_T;\mathbf y_T) \le \sum_{i=1}^m I(\mathbf f^{(i)};\mathbf y^{(i)}).
\]
Taking the supremum over the choice of locations $x_{1:T}\in\mathcal S^T$ yields $\tilde\gamma_T \le \sum_{i=1}^m \tilde\gamma_{n_i} \le m\,\tilde\gamma_{\tilde N}$, where $n_i\le \tilde N$ is the size of block $i$ and the sequence $\tilde\gamma_n$ is monotonically non-decreasing. \newline

\noindent \textbf{Step 2 (one-block bound).}
Fix a block of length $\tilde N$.
Define its spatial kernel matrix as $K:=K_{\tilde N}$ and its temporal correlation matrix as $D:=D_{\tilde N}$, where $D_{ij}=\rho^{|i-j|}$.
The spatio-temporal covariance over this block is the Hadamard product $\tilde K := K\circ D$.
By the Schur product theorem, $\tilde K$ is positive semi-definite because both $K\succeq 0$ and $D\succeq 0$.
Let $\mathbf 1$ denote the $\tilde N\times\tilde N$ all-ones matrix and define the difference matrix $A := \tilde K - K = K\circ(D-\mathbf 1)$.
Let
\[
M := I_{\tilde N}+\sigma^{-2}K \succ 0, \qquad E := \sigma^{-2}A,
\]
so that $I_{\tilde N}+\sigma^{-2}\tilde K = M+E \succ 0$.
Applying the congruence factorization $M+E = M^{1/2}\big(I_{\tilde N} + B\big)M^{1/2}$ with $B := M^{-1/2}EM^{-1/2}$ yields
\[
\tfrac12\log\det(M+E)
=
\tfrac12\log\det(M) + \tfrac12\log\det(I_{\tilde N}+B).
\]
The matrix $B$ is symmetric and satisfies $I_{\tilde N}+B \succ 0$.
Therefore, every eigenvalue of $B$ is strictly greater than $-1$.
Using the scalar inequality $\log(1+u)\le u$ for all $u>-1$, we bound the log-determinant term by the trace:
\[
\log\det(I_{\tilde N}+B)
=
\sum_{j=1}^{\tilde N}\log\big(1+\lambda_j(B)\big)
\le
\sum_{j=1}^{\tilde N}\lambda_j(B)
=
\mathrm{tr}(B).
\]
Substituting this trace inequality provides the upper bound
\[
\tfrac12\log\det(M+E)
\le
\tfrac12\log\det(M) + \tfrac12\,\mathrm{tr}(B).
\]
By the cyclicity of the trace operator, $\mathrm{tr}(B)=\mathrm{tr}(M^{-1}E)$. Since $D_{ii}=1$, the diagonal entries of $A=K\circ(D-\mathbf 1)$ vanish, and hence $\mathrm{tr}(E)=0$. Therefore,
\[
\mathrm{tr}(B)
=
\mathrm{tr}\big((M^{-1}-I_{\tilde N})E\big).
\]
Using the Frobenius inner product,
\[
\mathrm{tr}(B)
\le
\left|\mathrm{tr}\big((M^{-1}-I_{\tilde N})E\big)\right|
\le
\|M^{-1}-I_{\tilde N}\|_F\,\|E\|_F.
\]

Write
\[
\Gamma(K)
:=
\frac12\log\det(I_{\tilde N}+\sigma^{-2}K).
\]
Let $\lambda_1,\ldots,\lambda_{\tilde N}$ be the eigenvalues of $K$ and set $u_j=\sigma^{-2}\lambda_j\ge 0$. Then
\[
\|M^{-1}-I_{\tilde N}\|_F^2
=
\sum_{j=1}^{\tilde N}
\left(\frac{u_j}{1+u_j}\right)^2
\le
\sum_{j=1}^{\tilde N}\log(1+u_j)
=
2\Gamma(K).
\]
Here the inequality follows from
\[
\left(\frac{u}{1+u}\right)^2\le\log(1+u),
\qquad u\ge0,
\]
since the difference has value zero at $u=0$ and derivative $(1+u^2)/(1+u)^3>0$. Hence
\[
\|M^{-1}-I_{\tilde N}\|_F
\le
\sqrt{2\Gamma(K)}.
\]

It remains to bound $\|E\|_F$. As above,
\[
|D_{ij}-1|
=
1-\rho^{|i-j|}
\le
\epsilon |i-j|.
\]
Since $|K_{ij}|\le1$,
\[
\|A\|_F^2
\le
\epsilon^2\sum_{i,j=1}^{\tilde N}(i-j)^2
=
\frac{\epsilon^2\tilde N^2(\tilde N^2-1)}{6},
\]
and therefore
\[
\|E\|_F
=
\sigma^{-2}\|A\|_F
\le
\frac{\sigma^{-2}\epsilon\,\tilde N\sqrt{\tilde N^2-1}}{\sqrt{6}}.
\]
Combining the two norm bounds gives
\[
\frac12\log\det(I_{\tilde N}+\sigma^{-2}\tilde K)
\le
\Gamma(K)
+
\frac{\sigma^{-2}\epsilon\,\tilde N\sqrt{\tilde N^2-1}}{2\sqrt{3}}
\sqrt{\Gamma(K)}.
\]
Since $s\mapsto s+c\sqrt{s}$ is increasing for $s\ge0$ and $\Gamma(K)\le\gamma_{\tilde N}$,
\[
\tilde\gamma_{\tilde N}
\le
\gamma_{\tilde N}
+
\frac{\sigma^{-2}\epsilon\,\tilde N\sqrt{\tilde N^2-1}}{2\sqrt{3}}
\sqrt{\gamma_{\tilde N}}.
\]
Summing this bound over the $m$ blocks in Step~1 completes the proof.
\end{proof}

\paragraph{Remark.}
The block decomposition follows the approach of \citet{bogunovic2016time}. For unit-spaced observations, an $\mathcal{O}(\epsilon\tilde N^{5/2})$ block perturbation also follows from the more general analysis of \citet{imamura2020time}. Without the zero-trace step above, the congruence argument recovers this order directly by combining $\|M^{-1}\|_F\le\sqrt{\tilde N}$ with $\|A\|_F\le\epsilon\tilde N^2$, without requiring sign restrictions on the perturbation. The additional identity $\mathrm{tr}(E)=0$ allows $M^{-1}$ to be replaced by $M^{-1}-I_{\tilde N}$, yielding the sharper bound in Theorem~\ref{thm:TV_inf_gain_fixed}.

\subsection{Asymptotic Scaling with Drift}

Theorem~\ref{thm:TV_inf_gain_fixed} gives a two-regime dependence on the drift rate once the block length is chosen according to the growth of the time-invariant maximum information gain.

\begin{corollary}[Information gain scaling]
\label{cor:info_gain_scaling}
Suppose the spatial kernel admits a time-invariant maximum information gain satisfying
\[
\gamma_{\tilde N}
=
\widetilde{\mathcal{O}}(\tilde N^q)
\]
for some capacity exponent $q\in[0,1)$. Define the polynomial block scale
\[
N_\epsilon
:=
\epsilon^{-\frac{2}{4-q}},
\]
and choose
\[
\tilde N
=
\min\left\{
T,\,
\left\lceil N_\epsilon\right\rceil
\right\}.
\]
Then
\begin{equation}
\tilde\gamma_T
=
\widetilde{\mathcal{O}}\left(
T^q
+
T\epsilon^{\frac{2(1-q)}{4-q}}
\right).
\label{eq:info_gain_two_regime}
\end{equation}
In particular, in the persistent-drift regime $T\gtrsim N_\epsilon$,
\begin{equation}
\tilde\gamma_T
=
\widetilde{\mathcal{O}}\left(
T\epsilon^{\frac{2(1-q)}{4-q}}
\right),
\label{eq:info_gain_persistent_drift}
\end{equation}
whereas for $T\lesssim N_\epsilon$ one has
\[
\tilde\gamma_T
=
\widetilde{\mathcal{O}}(T^q).
\]
For fixed $T$, choosing $\tilde N=T$ in Theorem~\ref{thm:TV_inf_gain_fixed} gives
\begin{equation}
\tilde\gamma_T
\le
\gamma_T
+
\frac{\sigma^{-2}\epsilon\,T\sqrt{T^2-1}}{2\sqrt{3}}
\sqrt{\gamma_T},
\label{eq:info_gain_static_branch}
\end{equation}
so the perturbation vanishes as $\epsilon\to0$ and the static information gain is recovered.
\end{corollary}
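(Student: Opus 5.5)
The plan is to plug the choice of $\tilde N$ into Theorem~\ref{thm:TV_inf_gain_fixed} and treat the two regimes $T\le\lceil N_\epsilon\rceil$ and $T>\lceil N_\epsilon\rceil$ separately. In each regime the block count times the per-block bound is then controlled. Throughout, write $\gamma_n\le c\,n^q\,\mathrm{polylog}(n)$ and absorb logarithmic factors into $\widetilde{\mathcal O}$. Since $\log\tilde N\lesssim \log T+\log(1/\epsilon)$, these factors stay polylogarithmic in $T$ and $1/\epsilon$.

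\emph{Short-horizon regime} ($T\le\lceil N_\epsilon\rceil$, so $\tilde N=T$ and $m=1$). Theorem~\ref{thm:TV_inf_gain_fixed} gives \eqref{eq:info_gain_static_branch} directly, namely
\[
\tilde\gamma_T\le\gamma_T+c'\sigma^{-2}\epsilon T^2\sqrt{\gamma_T},
\]
using $T\sqrt{T^2-1}\le T^2$. The perturbation term is $\widetilde{\mathcal O}(\epsilon T^{2+q/2})$. I will show it is $\widetilde{\mathcal O}(T^q)$, which is equivalent to $\epsilon\lesssim T^{-(4-q)/2}$, i.e.\ $T\lesssim\epsilon^{-2/(4-q)}=N_\epsilon$. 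The ceiling only costs a constant factor, since $\lceil N_\epsilon\rceil\le 2N_\epsilon$ because $N_\epsilon\ge1$. Hence $\tilde\gamma_T=\widetilde{\mathcal O}(T^q)$ here, and the final claim about $\epsilon\to0$ for fixed $T$ is immediate from the same display.

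\emph{Persistent-drift regime} ($T>\lceil N_\epsilon\rceil$, $\tilde N=\lceil N_\epsilon\rceil$). Use the second bound in \eqref{eq:time_varying_info}, with $T/\tilde N+1\le 2T/\tilde N$:
\[
\tilde\gamma_T\lesssim \frac{T}{\tilde N}\Big(\tilde N^q+\epsilon\tilde N^{2+q/2}\Big)\,\mathrm{polylog}
= T\Big(\tilde N^{q-1}+\epsilon\tilde N^{1+q/2}\Big)\mathrm{polylog}.
\]
The choice $N_\epsilon=\epsilon^{-2/(4-q)}$ is exactly the balance point $\tilde N^{q-1}=\epsilon\tilde N^{1+q/2}$, i.e.\ $\tilde N^{(4-q)/2}=\epsilon^{-1}$. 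Substituting $\tilde N\asymp N_\epsilon$ (up to a factor $2$) gives
\[
\tilde N^{q-1}\asymp\epsilon^{2(1-q)/(4-q)},
\qquad
\epsilon\tilde N^{1+q/2}=\epsilon\cdot\epsilon^{-(2+q)/(4-q)}=\epsilon^{2(1-q)/(4-q)}.
\]
Both terms therefore match, and $\tilde\gamma_T=\widetilde{\mathcal O}(T\epsilon^{2(1-q)/(4-q)})$. Since the exponent $q-1$ is negative, replacing $\tilde N$ by $\lceil N_\epsilon\rceil\ge N_\epsilon$ only helps the first term, while the second changes by at most the constant $2^{1+q/2}$.

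Combining the two regimes gives the sum form \eqref{eq:info_gain_two_regime}, because each regime's bound is dominated by the sum. The main obstacle is bookkeeping rather than substance. The $\widetilde{\mathcal O}$ hides $\mathrm{polylog}(\tilde N)$ factors, and the $\sqrt{\gamma_{\tilde N}}$ factor brings in half a log power, so the exact balance holds only up to logarithms. I will state explicitly that logarithmic factors in $T$ and $1/\epsilon$ are suppressed, and check that monotonicity of $n\mapsto\gamma_n$ justifies using the bound at $\tilde N$.
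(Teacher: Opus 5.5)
Your proposal is correct and follows the paper's proof essentially step for step. You plug $\gamma_{\tilde N}=\widetilde{\mathcal O}(\tilde N^q)$ into Theorem~\ref{thm:TV_inf_gain_fixed}, balance $T\tilde N^{q-1}$ against $T\epsilon\tilde N^{1+q/2}$ to obtain $N_\epsilon$ in the persistent-drift regime, and in the short-horizon regime use the single-block bound with $\tilde N=T$ together with $\epsilon T^{2-q/2}\lesssim 1$; your explicit treatment of the ceiling ($\lceil N_\epsilon\rceil\le 2N_\epsilon$) and of $T/\tilde N+1\le 2T/\tilde N$ is slightly more careful than the paper's asymptotic bookkeeping but changes nothing substantive.
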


\begin{proof}
For $\tilde N\le T$, Theorem~\ref{thm:TV_inf_gain_fixed} and $\sqrt{\tilde N^2-1}\le\tilde N$ give
\[
\tilde\gamma_T
=
\widetilde{\mathcal{O}}\left(
\frac{T}{\tilde N}
\left(
\tilde N^q
+
\epsilon\tilde N^{2+q/2}
\right)
\right)
=
\widetilde{\mathcal{O}}\left(
T\tilde N^{q-1}
+
T\epsilon\tilde N^{1+q/2}
\right).
\]
Balancing the polynomial orders gives
\[
\tilde N^{q-1}
\asymp
\epsilon\tilde N^{1+q/2},
\qquad\text{or equivalently}\qquad
\tilde N
\asymp
\epsilon^{-\frac{2}{4-q}}
=
N_\epsilon.
\]
Thus, when $T\gtrsim N_\epsilon$, choosing $\tilde N$ at this scale gives
\[
\tilde\gamma_T
=
\widetilde{\mathcal{O}}\left(
T\epsilon^{\frac{2(1-q)}{4-q}}
\right).
\]

When $T\lesssim N_\epsilon$, choose $\tilde N=T$. Equation~\eqref{eq:info_gain_static_branch} and $\gamma_T=\widetilde{\mathcal{O}}(T^q)$ give
\[
\tilde\gamma_T
=
\widetilde{\mathcal{O}}\left(
T^q+\epsilon T^{2+q/2}
\right).
\]
The condition $T\lesssim N_\epsilon$ implies $\epsilon T^{2-q/2}\lesssim1$, so the perturbation term is at most $\widetilde{\mathcal{O}}(T^q)$. Combining the two regimes yields \eqref{eq:info_gain_two_regime}.
\end{proof}

For fixed $T$, the temporal matrix satisfies $D_{ij}\to1$ as $\epsilon\to0$. For the continuous kernels considered here on the compact domain $\mathcal S$, continuity of the log determinant under maximization therefore gives $\tilde\gamma_T\to\gamma_T$.

\section{Concentration of $R_T$}
\label{app:Convergence}

This appendix analyzes the concentration properties of the time-varying setting. Section~\ref{app:oracle_concentration} establishes the convergence of the oracle-value process via exponential covariance decay, which justifies the asymptotic average reward perspective discussed in Section~\ref{sec:reward_maximizing_perspective}. Section~\ref{app:span_concentration} proves the span-only concentration fallback, and Section~\ref{app:proof_realized_regret} proves the algorithm-specific realized-regret bound in Proposition~\ref{prop:realized_regret}.

\subsection{Concentration of the oracle-value process via covariance decay}
\label{app:oracle_concentration}

Define \( M_t := \sup_{x\in\mathcal S} f_t(x) \) for \( t\ge 1 \). Under the TV-GP dynamics, the sequence \(\{f_t\}\) is strictly stationary with each \( f_t\sim\mathcal{GP}(0,k_S) \). Consequently, \(\{M_t\}\) is stationary with a constant expectation \(\E[M_t]\). Assume the functions \(f_t\) have almost surely bounded sample paths on \(\mathcal S\) and are separable, implying \(M_t<\infty\) almost surely. By the Borell--TIS inequality \citep[Theorem~2.1.1]{adler2009random}, \(M_t\) is sub-Gaussian around its mean:
\[
\Pr\!\big(|M_t-\E[M_t]|>u\big)\le 2\exp\!\Big(-\frac{u^2}{2\sigma_M^2}\Big),
\qquad
\sigma_M^2:=\sup_{x\in\mathcal S}\Var\!\big(f_t(x)\big)\le 1.
\]
This concentration ensures \(\E[M_t^2]<\infty\). The following generic lemma controls the variance of a sum of random variables that exhibit exponentially decaying covariance.

\begin{lemma}[Averaging under exponential covariance decay]
\label{lem:covariance_decay_concise}
Let \(\{X_t\}_{t\ge 1}\) be random variables with \(\E[X_t^2]<\infty\) and suppose there exist \(m>0\) and \(c\in[0,1)\) such that \(\Cov(X_i,X_j)\le m\,c^{|i-j|}\) for all \(i,j\). Let \(S_T:=\sum_{t=1}^T X_t\). Then:
\begin{enumerate}[label=(\roman*)]
\item \(\Var(S_T)\le m\Big(T + 2\sum_{h=1}^{T-1}(T-h)c^h\Big)\le m\,T\,\frac{1+c}{1-c}\).
\item For any \(\eta>0\),
\[
\Pr\!\left(\left|\frac{S_T}{T}-\frac{\E[S_T]}{T}\right|>\eta\right)\le \frac{m}{T\,\eta^2}\cdot\frac{1+c}{1-c}\xrightarrow[T\to\infty]{}0,
\]
hence \(\frac{S_T}{T}-\frac{\E[S_T]}{T}\to 0\) in probability.
\item For any \(\omega\in(0,1)\),
\[
S_T \le \E[S_T]+\sqrt{\Var(S_T)/\omega} \ \ \le\ \ \E[S_T]+\sqrt{\frac{mT}{\omega}\cdot\frac{1+c}{1-c}} \qquad \text{with probability at least }1-\omega .
\]
\end{enumerate}
\end{lemma}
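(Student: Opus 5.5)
The plan is to prove the three parts in order, since each builds on the previous one.

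\textbf{Part (i).} Expand the variance of the sum as a double sum of covariances:
\[
\Var(S_T)=\sum_{i,j=1}^T \Cov(X_i,X_j)=\sum_{t=1}^T \Var(X_t)+2\sum_{h=1}^{T-1}\sum_{t=1}^{T-h}\Cov(X_t,X_{t+h}).
\]
Finite second moments make all covariances well defined. Then apply the hypothesis $\Cov(X_i,X_j)\le m\,c^{|i-j|}$ term by term. For $h=0$ this gives $\Var(X_t)\le m$. For lag $h\ge1$ there are exactly $T-h$ pairs, each bounded by $m c^h$. The upper bound on the covariance is all that is needed: all coefficients are positive, so negative covariances only help. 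This yields the first inequality. For the second, bound $T-h\le T$ and extend the geometric sum to infinity:
\[
T+2T\sum_{h\ge1}c^h=T\Big(1+\frac{2c}{1-c}\Big)=T\,\frac{1+c}{1-c},
\]
using $c\in[0,1)$.

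\textbf{Part (ii).} Apply Chebyshev's inequality to $S_T/T$:
\[
\Pr\big(|S_T/T-\E[S_T]/T|>\eta\big)\le \frac{\Var(S_T)}{T^2\eta^2}.
\]
Inserting the bound from (i) gives the displayed $m(1+c)/(T\eta^2(1-c))$. This tends to $0$ as $T\to\infty$ because $m$, $c$ and $\eta$ are fixed, which is exactly convergence in probability.

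\textbf{Part (iii).} Use the one-sided form of Chebyshev, or simply the two-sided form:
\[
\Pr\big(S_T-\E[S_T]>s\big)\le \Pr\big(|S_T-\E[S_T]|>s\big)\le \frac{\Var(S_T)}{s^2}.
\]
Setting $s=\sqrt{\Var(S_T)/\omega}$ makes the right-hand side equal to $\omega$. (If $\Var(S_T)=0$, then $S_T=\E[S_T]$ almost surely and the claim is trivial.) The second inequality in (iii) follows by monotonicity of the square root together with (i).

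There is no genuine obstacle in this lemma. The only point needing care is the counting in (i): getting exactly $T-h$ pairs at lag $h$ and the factor $2$ from symmetry of the covariance. The substantive work for the oracle process is instead verifying the covariance-decay hypothesis for $M_t$. This is presumably done afterwards, for instance via Gaussian interpolation or a coupling through the representation $f_{t+h}=\rho^h f_t+\sqrt{1-\rho^{2h}}\,g'$. It is not needed here.
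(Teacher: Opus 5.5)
Your proposal is correct and follows essentially the same route as the paper. Both expand $\Var(S_T)$ by lag and bound each of the $T-h$ lag-$h$ covariances by $m c^h$. Both then extend the geometric series to obtain $mT(1+c)/(1-c)$ and apply Chebyshev's inequality for (ii) and (iii).
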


\begin{proof}
By bilinearity of covariance, grouping terms by lag \(h=|i-j|\) yields
\[
\Var(S_T)=\sum_{i=1}^T\sum_{j=1}^T \Cov(X_i,X_j) \le m\sum_{i=1}^T\sum_{j=1}^T c^{|i-j|} = m\Big(T+2\sum_{h=1}^{T-1}(T-h)c^h\Big).
\]
Since \((T-h)\le T\) and \(\sum_{h=1}^\infty c^h = c/(1-c)\), we obtain
\[
T+2\sum_{h=1}^{T-1}(T-h)c^h \le T+2T\sum_{h=1}^\infty c^h = T\frac{1+c}{1-c},
\]
proving (i). Items (ii) and (iii) follow by Chebyshev's inequality applied to \(S_T/T\) and \(S_T\), respectively.
\end{proof}

The next lemma bounds the covariance of suprema under the canonical Gaussian autoregressive coupling that underlies the TV-GP recursion.

\begin{lemma}[Covariance contraction for Gaussian suprema]
\label{lem:cov_sup_contraction}
Let \(f\) and \(g\) be centered, separable Gaussian processes indexed by a set \(T\), with \(g(t)=\rho f(t)+\sqrt{1-\rho^2}\,h(t)\) for \(\rho\in[0,1]\), where \(h\) is an independent copy of \(f\). Assume \(\sup_{t\in T}\Var(f(t))\le \sigma_T^2<\infty\) and \(\sup_{t\in T}|f(t)|<\infty\) almost surely. Then \(\Cov\big(\sup_{t\in T} f(t), \sup_{t\in T} g(t)\big)\le \rho\,\sigma_T^2\).
\end{lemma}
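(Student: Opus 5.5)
We prove the bound by Gaussian interpolation along the correlation parameter, i.e.\ the Gaussian covariance identity for suprema, or equivalently an Ornstein--Uhlenbeck (Mehler) semigroup argument. Write $F:=\sup_{t} f(t)$ and note that $g\overset{d}{=}f$, since $\rho^2+(1-\rho^2)=1$. The pair $(f,g)$ is then a $\rho$-correlated copy in the Mehler sense. For a functional $\Phi$ of the process with $\mathbb E\Phi(f)^2<\infty$, this gives $\mathbb E[\Phi(f)\Phi(g)]=\mathbb E[\Phi(f)\,P_s\Phi(f)]$ with $\rho=e^{-s}$, where $P_s$ is the OU semigroup on the Gaussian space of $f$. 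The plan is to show that $\psi(\rho):=\Cov(\Phi(f),\Phi(g))$, viewed as a function of $\rho\in[0,1]$, satisfies $\psi(0)=0$, is convex and nondecreasing, and has $\psi(1)=\Var(F)\le\sigma_T^2$. Convexity together with $\psi(0)=0$ then gives $\psi(\rho)\le\rho\,\psi(1)$, which is the claim.

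\textbf{Step 1: reduction to finite index sets.} By separability, choose finite sets $T_n\uparrow$ a countable dense subset, so that $\sup_{T_n}f\uparrow F$ and $\sup_{T_n}g\uparrow \sup_T g$ almost surely. Borell--TIS gives $\mathbb E F^2<\infty$ and uniform square integrability, so it suffices to prove $\Cov(\max_{T_n}f,\max_{T_n}g)\le\rho\sigma_T^2$ for each finite $T_n$ and then pass to the limit by dominated convergence. On a finite index set, $f=\Sigma^{1/2}Z$ for a standard Gaussian vector $Z$, and $\Phi(z)=\max_i(\Sigma^{1/2}z)_i$ is Lipschitz. Its a.e.\ gradient is $\nabla\Phi(z)=\Sigma^{1/2}e_{i^*(z)}$, where $i^*(z)$ is the maximizing index, and therefore $\|\nabla\Phi\|^2=\Sigma_{i^*i^*}\le\sigma_T^2$.

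\textbf{Step 2: the power-series expansion.} Expand $\Phi$ in Hermite chaos, $\Phi=\sum_{k\ge0}J_k\Phi$. Then
\[
\psi(\rho)=\sum_{k\ge1}\rho^k\|J_k\Phi\|^2 .
\]
This is a power series with nonnegative coefficients, so $\psi(\rho)\le\rho\sum_{k\ge1}\|J_k\Phi\|^2=\rho\Var(\Phi(f))$ for $\rho\in[0,1]$. The Gaussian Poincaré inequality gives $\Var(\Phi(f))\le\mathbb E\|\nabla\Phi\|^2\le\sigma_T^2$, and combining the two bounds finishes the finite case.

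\textbf{Fallback and main obstacle.} If we want to avoid the chaos expansion, the alternative is the interpolation identity
\[
\Cov(\Phi(f),\Phi(g))=\int_0^\rho \mathbb E\big[\langle\nabla\Phi(X_u),\nabla\Phi(Y_u)\rangle\big]\,du,
\]
where $(X_u,Y_u)$ is a $u$-correlated pair. Each integrand equals $\mathbb E[\Sigma_{i^*(X_u)\,j^*(Y_u)}]$, and since $|\Sigma_{ij}|\le\sigma_T^2$ by Cauchy--Schwarz, the integral is at most $\rho\sigma_T^2$. This route is more direct, and it is the one I would write out. The main technical point is justifying differentiation under the expectation for the nonsmooth functional $\max$. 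I would handle this by smoothing with the soft-max $\beta^{-1}\log\sum_i e^{\beta x_i}$, whose gradient is a probability-weighted combination of columns of $\Sigma^{1/2}$, so that the inner products remain bounded by $\sigma_T^2$. The desired bound then follows by letting $\beta\to\infty$. The passage from finite to separable index sets in Step 1 is routine by comparison.
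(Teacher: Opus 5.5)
Your proposal is correct. The interpolation argument you say you would write out is essentially the paper's proof. The paper also reduces to a finite index set and smooths the maximum by log-sum-exp. It interpolates the cross-covariance from $0$ to $\rho\Sigma$ through a block matrix with off-diagonal $t\rho\Sigma$, which is your $u=t\rho$. Gaussian integration by parts then gives
$\Cov(\phi_n(A),\phi_n(C))=\rho\int_0^1\mathbb E\langle\nabla\phi_n(A(t)),\Sigma\nabla\phi_n(C(t))\rangle\,dt$.
Because the softmax gradients are probability vectors, the bilinear form is bounded by $\max_{i,j}|\Sigma_{ij}|\le\max_i\Sigma_{ii}\le\sigma_T^2$. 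The only difference is that the paper works in unwhitened coordinates rather than with your $\langle\Sigma^{1/2}p,\Sigma^{1/2}q\rangle$.

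Your Step~2 is a genuinely different route. The Hermite expansion $\psi(\rho)=\sum_{k\ge1}\rho^k\|J_k\Phi\|^2$ combined with the Gaussian Poincar\'e inequality applies directly to the Lipschitz functional $\max$. It needs no smoothing and no differentiation under the expectation. It also yields the sharper intermediate bound $\Cov(\sup f,\sup g)\le\rho\,\Var(\sup f)$ and the convexity of $\rho\mapsto\psi(\rho)$. The price is relying on the chaos decomposition and Poincar\'e's inequality instead of the elementary integration-by-parts identity the paper uses.

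Two of your limit passages are slightly simpler than the paper's. First, removing the smoothing needs only the uniform estimate $0\le\beta^{-1}\log\sum_i e^{\beta x_i}-\max_i x_i\le\beta^{-1}\log k$, whereas the paper invokes Vitali. Second, your monotone sandwich $\max_{T_1}f\le\max_{T_n}f\le\sup_T f$ gives the passage to separable $T$ by dominated convergence directly.
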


\begin{proof}
We first prove the claim for a finite set \(T=\{1,\dots,k\}\) and then pass to a general separable \(T\). Let \(A=(f(1),\dots,f(k))\) and \(C=(g(1),\dots,g(k))\). By construction, the joint distribution of \((A,C)\) is a zero-mean Gaussian with cross-covariance \(\Cov(A_i,C_j)=\rho\,\Sigma_{ij}\), where \(\Sigma_{ij} = \Cov(A_i,A_j)\).

Let \(\phi_n(u) := \frac{1}{n}\log\big(\sum_{i=1}^k e^{n u_i}\big)\) be the smooth approximation of the maximum function. The gradient \(\nabla\phi_n(u)\) is the softmax function, ensuring that for any \(u\), \(\nabla\phi_n(u)\) is a probability vector (\(\|\nabla\phi_n(u)\|_1=1\) with strictly non-negative components).

To compute the covariance \(\Cov(\phi_n(A),\phi_n(C))\), we employ the Gaussian interpolation technique standard in the proofs of Gaussian comparison inequalities \citep[see, e.g.,][Section 2.2]{adler2009random}. Consider the interpolation path parameterized by \(t \in [0,1]\) with the block covariance matrix
\[
\Gamma(t) = \begin{pmatrix} \Sigma & t \rho \Sigma \\ t \rho \Sigma & \Sigma \end{pmatrix}.
\]
Let \((A(t), C(t)) \sim \mathcal{N}(0, \Gamma(t))\). For \(t=0\), \(A(0)\) and \(C(0)\) are independent, implying \(\Cov(\phi_n(A(0)), \phi_n(C(0))) = 0\). For \(t=1\), we recover the joint distribution of \((A,C)\). Differentiating the expectation of \(\phi_n(A(t))\phi_n(C(t))\) with respect to \(t\) and applying multivariate Gaussian integration by parts \citep[Lemma 2.1.4]{adler2009random} yields:
\[
\frac{d}{dt}\E[\phi_n(A(t))\phi_n(C(t))] = \sum_{i=1}^k \sum_{j=1}^k \frac{d \Gamma_{AC,ij}(t)}{dt} \E\!\left[ \frac{\partial \phi_n(A(t))}{\partial A_i} \frac{\partial \phi_n(C(t))}{\partial C_j} \right].
\]
Since \(\frac{d \Gamma_{AC,ij}(t)}{dt} = \rho \Sigma_{ij}\), integrating from \(t=0\) to \(t=1\) establishes the exact identity:
\[
\Cov(\phi_n(A),\phi_n(C)) = \rho \int_0^1 \E\!\left[ \langle \nabla \phi_n(A(t)), \Sigma \nabla \phi_n(C(t)) \rangle \right] dt.
\]

We bound the bilinear form inside the expectation via \(\ell_1/\ell_\infty\) duality. Let \(p = \nabla \phi_n(A(t))\) and \(q = \nabla \phi_n(C(t))\). Because \(p\) and \(q\) are probability vectors,
\[
\sum_{i,j} p_i \Sigma_{ij} q_j \le \left( \max_{i,j} |\Sigma_{ij}| \right) \sum_{i,j} p_i q_j = \max_{i,j} |\Sigma_{ij}|.
\]
Because \(\Sigma\) is positive semi-definite, its maximum absolute entry lies on the diagonal: \(\max_{i,j} |\Sigma_{ij}| \le \max_i \Sigma_{ii}\). By assumption, \(\max_i \Sigma_{ii} \le \sigma_T^2\). Therefore, the bilinear form is bounded almost surely by \(\sigma_T^2\), independent of the dimension \(k\).

Substituting this bound into the integral yields:
\[
\Cov(\phi_n(A),\phi_n(C)) \le \rho \int_0^1 \E[\sigma_T^2] dt = \rho\sigma_T^2.
\]

The approximations \(\phi_n(A)\) and \(\phi_n(C)\) converge almost surely to \(\phi(A)\) and \(\phi(C)\). The Borell--TIS inequality ensures that \(\{\phi_n(A)\}\) and \(\{\phi_n(C)\}\) are uniformly square-integrable. Vitali's convergence theorem permits passing the limit through the covariance, which yields \(\Cov(\max_i A_i, \max_i C_i)\le \rho\sigma_T^2\).

To extend this to a general separable set \(T\), let \(\{T_m\}_{m\ge 1}\) be an increasing sequence of finite subsets whose union is dense in a separability set for \(f\) and \(g\). This guarantees that \(\sup_{t\in T_m} f(t)\) and \(\sup_{t\in T_m} g(t)\) converge almost surely to their respective suprema over \(T\). Applying the finite-set result yields \(\Cov(\sup_{t\in T_m} f(t), \sup_{t\in T_m} g(t))\le \rho\sigma_T^2\) for all \(m\). The Borell--TIS inequality provides \(\sup_{m}\E[(\sup_{t\in T_m} f(t))^2]<\infty\). Uniform integrability allows taking the limit as \(m\to\infty\), which establishes the bound for the general separable set \(T\).
\end{proof}

In the TV-GP model, the transition rule is \(f_{t+h}(\cdot)=\rho^h f_t(\cdot)+\sqrt{1-\rho^{2h}}\, \tilde g_{t,h}(\cdot)\), where \(\rho:=\sqrt{1-\epsilon}\) and \(\tilde g_{t,h}\) is a centered Gaussian process with the same spatial kernel as \(f_t\). The process \(\tilde g_{t,h}\) is independent of \(f_t\) because it is a linear combination of the independent innovations \(\{g_{t+1},\dots,g_{t+h}\}\). Setting \(T=\mathcal S\) in Lemma~\ref{lem:cov_sup_contraction} gives \(\sigma_T^2\le 1\) and \(\Cov(M_t,M_{t+h}) \le \rho^h\) for \(h\ge 0\). We apply Lemma~\ref{lem:covariance_decay_concise} by substituting \(X_t:=M_t\), \(m=1\), and \(c=\rho\). For any \(\eta>0\),
\[
\Pr\!\left(\left|\frac1T\sum_{t=1}^T M_t-\E[M_1]\right|>\eta\right) \le \Pr\!\left(\left|\frac1T\sum_{t=1}^T M_t-\frac{\E[\sum_{t=1}^T M_t]}{T}\right|>\eta\right) \to 0,
\]
since \(\E[\sum_{t=1}^T M_t]/T=\E[M_1]\) by stationarity. Equivalently, the time average \(\frac1T\sum_{t=1}^T f_t(x_t^\star)\) converges in probability to \(\E[\sup_{x\in\mathcal S} f_1(x)]\). The deviation bound in Lemma~\ref{lem:covariance_decay_concise}(iii) provides an explicit \(\mathcal{O}(\sqrt{T})\) concentration inequality, with constants depending on \(\epsilon\) entirely through the factor \((1+\rho)/(1-\rho)\).

\subsection{Span-only concentration fallback}
\label{app:span_concentration}

This subsection justifies the span-only fallback bound used in Section~\ref{subsec:prob_regret_bound}. Recall that
\[
W_t=\sup_{x,y\in\mathcal S}\big(f_t(x)-f_t(y)\big).
\]
Since \(r_t\le W_t\) for every policy and every round,
\[
R_T\le \sum_{t=1}^T W_t.
\]
Define
\[
S_T^W:=\sum_{t=1}^T W_t.
\]
Equivalently,
\[
S_T^W
=
\sup_{(x_1,y_1),\ldots,(x_T,y_T)\in\mathcal S\times\mathcal S}
\sum_{t=1}^T \big(f_t(x_t)-f_t(y_t)\big).
\]
Thus \(S_T^W\) is the supremum of a centered Gaussian process indexed by
\((\mathcal S\times\mathcal S)^T\).

Let \(\rho=\sqrt{1-\epsilon}\). For any fixed index sequence
\((x_1,y_1),\ldots,(x_T,y_T)\), the variance of the corresponding Gaussian variable satisfies
\[
\begin{aligned}
\operatorname{Var}\!\left(
\sum_{t=1}^T \big(f_t(x_t)-f_t(y_t)\big)
\right)
&=
\sum_{t,s=1}^T
\rho^{|t-s|}
\Big[
k_S(x_t,x_s)-k_S(x_t,y_s)-k_S(y_t,x_s)+k_S(y_t,y_s)
\Big] \\
&\le
4\sum_{t,s=1}^T \rho^{|t-s|}  \\
&\le
4T\frac{1+\rho}{1-\rho}.
\end{aligned}
\]
Here we used \(k_S(x,x)\le 1\), which implies
\[
\operatorname{Var}\big(f(x)-f(y)\big)\le 4
\]
and bounds the absolute covariance between two such increments by \(4\).

By Borell--TIS, for all \(u>0\),
\[
\mathbb P\!\left\{
S_T^W-\mathbb E[S_T^W]>u
\right\}
\le
\exp\!\left(
-\frac{u^2}{8T(1+\rho)/(1-\rho)}
\right).
\]
Stationarity gives \(\mathbb E[S_T^W]=T\mu_w\). Therefore, with probability at least \(1-\omega\),
\[
S_T^W
\le
T\mu_w
+
\sqrt{
8T\frac{1+\rho}{1-\rho}\log(1/\omega)
}.
\]
Since \(R_T\le S_T^W\), we obtain
\[
R_T
\le
T\mu_w
+
\sqrt{
8T\frac{1+\rho}{1-\rho}\log(1/\omega)
}
\]
with probability at least \(1-\omega\).

\subsection{Concentration of realized regret (Proof of Proposition~\ref{prop:realized_regret})}
\label{app:proof_realized_regret}

Lemma~\ref{lem:alg_spec_bnd} establishes that the combined confidence and regularity event $\mathcal{E}_t$ holds with an unconditional probability of $\mathbb{P}(\mathcal{E}_t) \ge 1-\delta$. Because all instantaneous regret terms are non-negative, $r_t$ is bounded pathwise by the UCB width on the success event and the spatial span on the failure event:
\begin{equation}
r_t \;\le\; 2\sqrt{\beta}\sigma_{t-1}(x_t) + C_2 + W_t \mathbb{I}_{\mathcal{E}_t^c},
\end{equation}
where $W_t = \sup_{x\in\mathcal{S}} f_t(x) - \inf_{x\in\mathcal{S}} f_t(x)$. Summing over $T$ and applying the Cauchy-Schwarz inequality to the standard deviations yields the realized cumulative regret:
\begin{equation}
R_T \;\le\; \underbrace{\sqrt{C_1 T \beta \tilde{\gamma}_T} \;+\; T C_2}_{:= U_T} \;+\; \underbrace{\sum_{t=1}^T W_t \mathbb{I}_{\mathcal{E}_t^c}}_{:= F_T}.
\label{eq:pathwise_sum_appendix}
\end{equation}
Since $\tilde{\gamma}_T$ deterministically bounds the information gain for any sequence of length $T$, $U_T$ is a valid upper bound almost surely. 

To bound the stochastic failure penalty $F_T$, we rely on the unconditional properties of the spatial span. By Lemma~\ref{lem:tail_expectation_span}, the expected product of the sub-Gaussian span and the failure indicator satisfies
\[
\mathbb{E}[W_t \mathbb{I}_{\mathcal{E}_t^c}] \le \delta C_3.
\]

Because instantaneous regret is strictly non-negative, $F_T \ge 0$. Applying Markov's inequality directly to the sum yields:
\begin{equation}
\mathbb{P}\left( F_T > \frac{T \delta C_3}{\omega} \right) \le \frac{\mathbb{E}[F_T]}{(T \delta C_3) / \omega} = \omega.
\end{equation}
Substituting this limit into \eqref{eq:pathwise_sum_appendix} yields the bound in \eqref{eq:bound_markov}.

\section{Kalman filtering implementation for the simulation study}
\label{app:kalman_filter_simulations}

This appendix describes the exact posterior updates used in Section~\ref{sec:simulation_study}. On the finite simulation grid $\{x^{(1)},\ldots,x^{(N)}\}$, define $f_t=(f_t(x^{(1)}),\ldots,f_t(x^{(N)}))^\top\in\mathbb{R}^N$, and let $K_S\in\mathbb{R}^{N\times N}$ be the spatial covariance matrix with entries $(K_S)_{ij}=k_S(x^{(i)},x^{(j)})$. Restricting the TV-GP recursion \eqref{eq:tvgp} to this grid gives the linear Gaussian state-space model
\[
\begin{aligned}
f_t &= \rho f_{t-1}+\xi_t,
&\qquad
\xi_t &\sim \mathcal{N}(0,\epsilon K_S), \\
y_t &= H_t f_t+v_t,
&\qquad
v_t &\sim \mathcal{N}(0,\sigma^2),
\end{aligned}
\]
where $\rho=\sqrt{1-\epsilon}$ and $H_t\in\mathbb{R}^{1\times N}$ selects the coordinate corresponding to the sampled grid point $x_t$.

Let $\mu_{t|t}$ and $P_{t|t}$ denote the posterior mean and covariance of $f_t$ after observing $y_t$, and let $\mu_{t|t-1}$ and $P_{t|t-1}$ denote the one-step predictive quantities. We initialize $\mu_{1|0}=0$ and $P_{1|0}=K_S$. Exact inference is performed by the Kalman recursion
\[
\begin{aligned}
\mu_{t|t-1} &= \rho\mu_{t-1|t-1},
&
P_{t|t-1} &= \rho^2 P_{t-1|t-1}+\epsilon K_S, \\
S_t &= H_tP_{t|t-1}H_t^\top+\sigma^2,
&
L_t &= P_{t|t-1}H_t^\top S_t^{-1}, \\
\mu_{t|t} &= \mu_{t|t-1}+L_t(y_t-H_t\mu_{t|t-1}),
&
P_{t|t} &= P_{t|t-1}-L_tH_tP_{t|t-1}.
\end{aligned}
\]
The acquisition rule is evaluated on the grid using the predictive mean and marginal variances,
\[
i_t\in\arg\max_{i\in\{1,\ldots,N\}}
\left\{
(\mu_{t|t-1})_i+\sqrt{\beta}\sqrt{(P_{t|t-1})_{ii}}
\right\}.
\]
This recursion is equivalent to GP posterior inference under the spatio-temporal kernel \eqref{eq:sep_kernel} restricted to the grid, but avoids recomputing the posterior from the full observation history. Since $N$ is fixed, each posterior update costs $\mathcal{O}(N^2)$ rather than growing with $t$, enabling exact simulations over long horizons.

\section{Simulation reproducibility details}
\label{app:simulation_reproducibility}

This appendix provides the reproducibility details for the simulation study in Section~\ref{sec:simulation_study}. The experiments use synthetic data generated directly from the TV-GP model in Equation~\eqref{eq:tvgp}. No external datasets, pretrained models, or human-subject data are used.

The action space is the fixed $20\times20$ grid on $\mathcal S=[0,1]^2$, giving $N=400$ candidate actions. The spatial kernel is the squared exponential kernel with lengthscale $l=0.1$ and prior variance $1.0$. The observation noise variance is fixed at $\sigma^2=0.01$. Each reported average regret value is computed over $100$ independent TV-GP sample paths of length $T=10{,}000$.

For each drift rate $\epsilon$ and exploration parameter $\beta$, the latent process is simulated according to the recursion in Equation~\eqref{eq:tvgp}. Posterior inference is performed exactly on the finite grid using the Kalman filtering recursion in Appendix~\ref{app:kalman_filter_simulations}. The constant-$\beta$ TV-GP-UCB policy selects actions according to
\[
i_t\in\arg\max_{i\in\{1,\ldots,N\}}
\left\{
(\mu_{t|t-1})_i+\sqrt{\beta}\sqrt{(P_{t|t-1})_{ii}}
\right\}.
\]
The logarithmic baseline uses the schedule
\[
\beta_t=c_1\log(c_2t),
\qquad c_1=0.8,\quad c_2=4,
\]
as described in Section~\ref{sec:simulation_study}.

The accompanying code archive contains the scripts used to generate the synthetic sample paths, run constant-$\beta$ TV-GP-UCB, run the logarithmic time-varying-$\beta_t$ baseline, compute the average regret curves, bootstrap the empirical minimizers $\hat\beta_{\mathrm{opt}}$, and regenerate Figure~\ref{fig:simulation-study}.

For the left panel of Figure~\ref{fig:simulation-study}, shaded bands denote approximate 95\% confidence intervals for the mean average regret, computed as $\pm 1.96$ standard errors over the $100$ independent trials. For the right panel, the uncertainty intervals for $\hat\beta_{\mathrm{opt}}$ are computed using bootstrap percentile intervals over the independent trials.

The computations for Figure~\ref{fig:simulation-study} were run on a MacBook Pro with an Apple M1 Pro chip, 8 CPU cores, and 16 GB of unified memory. The simulations used CPU multiprocessing and no GPU acceleration. The complete run used to generate the final figure took approximately 48 hours of wall-clock time. Memory usage was modest because the Kalman filter operates on a fixed $N=400$ state dimension, but the total runtime is dominated by the large number of independent trials and parameter settings.

\end{document}